\newtheorem{theorem}{Theorem}
\newtheorem{lemma}{Lemma}
\def\vb{{\bm{b}}}
\def\ve{{\bm{e}}}
\def\vp{{\bm{p}}}
\def\vr{{\bm{p}}}
\def\vw{{\bm{w}}}
\def\vx{{\bm{x}}}
\def\vy{{\bm{y}}}
\def\vz{{\bm{z}}}
\def \Ib {{\mathbf{Ib}}}
\def \RR {{\mathbb{R}}}
\def \Ib {{\mathbf{I}}}
\def \Lb {{\mathbf{L}}}
\def \Ab {{\mathbf{A}}}
\def \Db {{\mathbf{D}}}
\def \Hb {{\mathbf{H}}}
\def \Pb {{\mathbf{P}}}
\def \Rb {{\mathbf{R}}}
\def \Tb {{\mathbf{T}}}
\def \Ub {{\mathbf{U}}}
\def \Zb {{\mathbf{Z}}}
\def \R {{\mathbb{R}}}
\def\p{\bm{w}}
\newcommand{\LL}{f}
\def\rr{\bm{r}}
\def\pp{\bm{p}}
\title{Stochastic Gradient Descent with Nonlinear Conjugate Gradient-Style Adaptive Momentum}
\author{
  Bao Wang \\
  Department of Mathematics\\
  Scientific Computing and Imaging Institute\\
  University of Utah\\
  \texttt{wangbaonj@gmail.com}\\
   \and
  Qiang Ye \\
  Department of Mathematics\\
  University of Kentucky\\
   \texttt{qye3@uky.edu}
}
\begin{document}

\maketitle

\begin{abstract}

Momentum plays a crucial role in stochastic gradient-based optimization algorithms for accelerating or improving training deep neural networks (DNNs). In deep learning practice, the momentum is usually weighted by a well-calibrated constant. However, tuning hyperparameters for momentum can be a significant computational burden. In this paper, we propose a novel \emph{adaptive momentum} for improving DNNs training; this adaptive momentum, with no momentum related hyperparameter required, is motivated by the nonlinear conjugate gradient (NCG) method. Stochastic gradient descent (SGD) with this new adaptive momentum eliminates the need for the momentum hyperparameter calibration, allows a significantly larger learning rate, accelerates DNN training, and improves final accuracy and robustness of the trained DNNs. For instance, SGD with this adaptive momentum reduces classification errors for training ResNet110 for CIFAR10 and CIFAR100  from $5.25\%$ to $4.64\%$ and $23.75\%$ to $20.03\%$, respectively. Furthermore, SGD with the new adaptive momentum also benefits adversarial training and 
improves adversarial robustness of the trained DNNs.

\end{abstract}

\section{Introduction}
Given a training dataset $\Omega_N:=\{\vx_i, y_i\}_{i=1}^N$ with $\vx_i$ and $y_i$ being the data-label pair of the $i$th instance. Natural training, i.e., training a machine learning classifier $y=g(\vx, \vw)$ for clean image classification, can be formulated as solving the following empirical risk minimization 
(ERM) problem \cite{vapnik1992principles}:
\begin{equation}
\label{eq:ERM}
{\min_{\vw\in \RR^d} f(\vw) := \frac{1}{N}\sum_{i=1}^N f_i(\vw) := \frac{1}{N}\sum_{i=1}^N \mathcal{L}(g(\vx_i, \vw), y_i),}
\end{equation}
where $\mathcal{L}$ is  typically the cross-entropy loss 
between the predicted label $g(\vx_i, \vw)$ and the ground truth $y_i$. 
To train a robust classifier under adversarial attacks \cite{goodfellow2014explaining,madry2017towards}, we often solve the following  
empirical adversarial risk minimization 
(EARM) problem:
\begin{equation}
\label{eq:EARM}
{\min_{\vw\in \RR^d} \frac{1}{N}\sum_{i=1}^N \max_{\|\vx_i-\vx_i'\|_2\leq \epsilon}\mathcal{L}(g(\vx_i', \vw), y_i),}
\end{equation}
with $\epsilon>0$ being a constant. Training deep neural networks (DNNs) by solving \eqref{eq:ERM} or \eqref{eq:EARM} is a very difficult task: 1) the objective function is highly nonconvex \cite{lecun2015deep}; 2) $N$ is very large, e.g., in ImageNet classification $N\sim 10^{6}$ \cite{russakovsky2015imagenet}, which makes computing the gradient of the loss function difficult and inefficient; 3) the dimension of $\vw$ is very high; for instance, in training ResNet200 for ImageNet classification, $\vw$ is of dimension $\sim 65M$  \cite{he2016deep}. Due to the above challenges, stochastic gradient descent (SGD) becomes the method of choice for training DNNs for image classification  \cite{bottou2018optimization}; momentum scaled by a well-calibrated weight is usually integrated with SGD to accelerate or improve training DNNs \cite{polyak1964some,sutskever2013importance,bengio2013advances,paszke2019pytorch,wang2020scheduled}. 



Starting from $\vw_0 \in \RR^d$ and $\vp_0 =0$,  in the $n (n\geq 1)$th iteration of SGD with momentum (scaled by a constant $\beta \geq 0$), we randomly 
sample a mini-batch $\{i_k\}_{k=1}^m \subset [N]\  (m\ll N)$;
update $\vw^{n}$ as follows \cite{paszke2019pytorch}: 
\begin{eqnarray}
\label{eq:ConstanMomentum}
{\begin{aligned}
  \vp_{n} &= \beta \vp_{n-1} + \frac{1}{m}\sum_{k=1}^m \nabla f_{i_k}(\vw_{n}),\\  
  \vw_{n+1} &= \vw_{n} - \alpha \vp_{n}, 
\end{aligned}}
\end{eqnarray}
where $\alpha > 0$  is the step size. $\nabla f_{i_k}(\vw_{n})$ in \eqref{eq:ConstanMomentum} can be replaced with $\nabla f_{i_k}(\vw_{n} - \alpha \vp_{n-1})$ to get the Nesterov momentum \cite{sutskever2013importance}. 
In training DNNs, the tuning of $\alpha$ 
and $\beta$ is time consuming. 
As a result, several adaptive learning rate 
algorithms have been developed and are widely used: Adagrad \cite{duchi2011adaptive} adapts learning rate to the parameters based on the sum of the squares of the gradients; Adadelta \cite{zeiler2012adadelta} and RMSprop \cite{hinton2012neural} modify Adagrad by restricting the window of accumulated past gradients to some fixed size; 
Adam integrates momentum with adaptive step size and achieves remarkable performances in many applications \cite{kingma2014adam,dozat2016incorporating,loshchilov2018fixing,reddi2019convergence}. However, there have not been many works on the varying or adaptive momentum ($\beta$). 
Nonlinear conjugate gradient (NCG) method \cite{fletcher1964function} 
can be considered as an 
adaptive momentum method combined with steepest descent along the search direction.
It significantly accelerates convergence of 
the gradient descent method and it has  some nice theoretical convergence guarantees
\cite{10.1093/imanum/5.1.121,doi:10.1137/0802003,dai1999nonlinear,hager2005new,powell1976some,Zoutendijk}. A major obstacle to using NCG is the need for a line search at each iteration; even an inexact search requires several function/derivative evaluations. As a result, it is not used very often in DNN applications \cite{Le_onoptimization}. However, the success of adaptive momentum in NCG and the success of SGD for DNN training motivate us to use the adaptive momentum from NCG to improve SGD based DNN training. 





\subsection{Our Contributions}
In this paper, we leverage the celebrated 
NCG-style adaptive momentum to accelerate GD/SGD; in particular, to improve training DNNs for image classification. We propose a GD/SGD method with a fixed learning rate but leverage an adaptive momentum as used in NCG. We will present some convergence results to show global convergence under certain conditions on the learning rate. For the case of quadratic functions, we will show the accelerated convergence rate under some quite general conditions.  We summarize the major advantages of SGD with adaptive momentum below:
\begin{itemize}[leftmargin=*]
\item It converges faster and allows us to use significantly larger step sizes to train DNNs.

\item It improves the accuracy and adversarial 
robustness of the trained DNNs for image classification.
For instance, it reduces test errors of training ResNet110 for CIFAR10 and CIFAR100 classification from $5.25\%$ to $4.64\%$ and $23.75\%$ to $20.03\%$, respectively. 
Furthermore, SGD with the new adaptive momentum also benefits adversarial training and hence improves adversarial robustness of the trained DNNs.

\item It eliminates the work for momentum-related hyperparameter 
tuning with almost no computational overhead.
\end{itemize}



\subsection{Related Works}
Non-constant momentum has been used to accelerate GD. One of the most exciting results is the Nesterov accelerated gradient (NAG) \cite{nesterov1983method,nesterov1998introductory}, which replaces the constant momentum with an iteration dependent 
momentum and achieves a convergence rate of  $O(1/k^2)$ for convex optimization (vs. GD with a convergence rate  $O(1/k)$). 
However, directly applying NAG to  
SGD suffers from error accumulation \cite{devolder2014first,wang2020scheduled}, which can be alleviated by using NAG with scheduled restart \cite{wang2020scheduled} at the cost of hyperparameter calibration. 


NCG \cite{fletcher1964function} is a popular optimization method that has been studied extensively. For various formulations, it has been proved for a general function to have a descent property and global convergence under some assumptions on the step size known as Wolf conditions; see  \cite{10.1093/imanum/5.1.121,dai1999nonlinear,doi:10.1137/0802003,hager2005new,powell1976some,Zoutendijk}. NCG has been applied to deep learning; 
\cite{Le_onoptimization} empirically compares NCG, L-BFGS, and the momentum methods and found each to be superior in some problems. There are some related works in avoiding the line search in NCG.
For example, \cite{Moller} uses some estimate of the Hessian to approximate optimal step size $\alpha$, while \cite{adya2018nonlinear} replaces the line search by starting with some initial $\alpha$ and then increasing or decreasing its value at each iteration by checking whether the loss  decreases or increases. 

\subsection{Organization}
We organize this paper as follows: In Section~ {\em  Algorithm: GD/SGD with Adaptive Momentum}, 
we give a brief review of  NCG, and then we leverage the adaptive momentum in NCG to accelerate GD and SGD. 
In Section ~\ref{sec:Theory}, 
we give the convergence guarantees of the proposed algorithms. We then present experiments to demonstrate the performance of the SGD with adaptive momentum for training DNNs in Section~\ref{sec:Experiments}. 
We end with some concluding remarks in the last section. 
Technical proofs and more experimental details 
are provided in the appendix.

\subsection{Notations}
We denote scalars by lower or upper case letters; vectors/ matrices by lower/upper case bold face letters. For a vector $\vx = (x_1, \cdots, x_d)^T\in \mathbb{R}^d$, we use $\|\vx\| = {(\sum_{i=1}^d |x_i|^2)^{1/2}}$ to denote its $\ell_2$ norm, and the $\ell_\infty$ norm of $\vx$ by $\|\vx\|_\infty = \max_{i=1}^d|x_i|$.
For a matrix $\Ab$, we use $\|\Ab\|_{2/\infty}$ to denote its induced norm by the vector $\ell_{2/\infty}$ norm. 
We denote the set $\{1, 2, \cdots, N\}$ as $[N]$. For a function $f(\vx): \mathbb{R}^d \rightarrow \mathbb{R}$, we denote $\nabla f(\vx)$ and $\nabla^2 f(\vx)$ the gradient and Hessian of $f(\vx)$, respectively.

\section{Algorithm: GD/SGD with Adaptive Momentum}\label{sec:Algorithm}
\subsection{Nonlinear Conjugate Gradient Methods}\label{subsec:NCG}
The classical GD with an optimal learning rate (step size) has a local convergence rate that depends on the condition number $\kappa$ of the Hessian matrix at a local minimum. 
The conjugate gradient (CG) method augments the gradient with a suitable momentum term as the search direction. In the quadratic case  ($\min_\vw f(\vw \in \RR^d) := 1/2\vw^T\Ab\vw + \vb^T\vw$ with $\Ab \in \RR^{d\times d}$ and $\vb \in \RR^d$ being a known matrix and vector, respectively.), the modified directions maintain orthogonality in the $\Ab$-inner product ($\vw^T\Ab\vw$), and it allows a significantly accelerated convergence rate that depends on $\sqrt{\kappa}$. It has been generalized to general nonlinear functions as follows. 


\begin{itemize}[leftmargin=*]
\item In the first iteration, perform a line search along $\vr_0:=\nabla f(\vw_0)$ to get the initial step size, i.e., $\alpha_0 := \arg\min_\alpha f(\vw_0 - \alpha\vr_0)$, and update $\vw$ by $\vw_1=\vw_0 -\alpha_0\vr_0$.
\item For $n\geq 1$, we perform the following updates for the $n$th iteration: 
\begin{itemize}[leftmargin=*]
    \item Compute $$\beta_n = \beta_n^{FR} := \frac{(\nabla f(\vw_n)^T \nabla f(\vw_n))}{(\nabla f(\vw_{n-1})^T \nabla f(\vw_{n-1}))}.$$





    \item Update the search direction: $$\vr_n = \nabla f(\vw_n) + \beta_n \vr_{n-1}.$$

    \item Perform a line search: $$\alpha_n = \arg\min_\alpha f(\vw_n - \alpha \vr_n).$$

    \item Update the position: $$\vw_{n+1} = \vw_n - \alpha_n \vr_n.$$
\end{itemize}
\end{itemize}

There are several possible formulations on $\beta_n$ in the literature, and the one we present $\beta_n^{FR}$ is known as the Fletcher-Reeves formula \cite{fletcher1964function}; see \cite{hestenes1952methods,dai1999nonlinear,polak1969note,shewchuk1994introduction,hager2005new} for other formulations and related theoretical properties. 
The NCG has been empirically found to have some similar convergence properties as the classical linear CG method. There have been several analyses to show a descent property and convergence of NCG for a general function under some forms of the Wolf conditions for inexact line search of $\alpha_n$; see  \cite{10.1093/imanum/5.1.121,doi:10.1137/0802003,dai1999nonlinear,hager2005new,powell1976some,Zoutendijk}. However, there appears to be no result characterizing its CG-like  accelerated convergence rate.

In the NCG method, a line search is performed to determine $\alpha_n$, while $\beta_n$ can be regarded as the momentum coefficient. Here, the momentum leverages the past gradient instead of the past position, which is different from the momentum in \eqref{eq:ConstanMomentum}. 
This has an advantage over the traditional momentum method in that the momentum coefficient is adaptively determined and no tuning is needed. A disadvantage of NCG is that even an inexact line search for $\alpha_n$ requires several function/gradient evaluations and would make the method less appealing for training DNNs. Therefore, NCG is rarely used for DNNS.

\subsection{(Stochastic) Gradient Descent with NCG Momentum}
The discussions in the previous parts motivate us to integrate the adaptive momentum coefficient $\beta_n$ into GD/SGD. We propose to consider GD/SGD with an adaptive momentum, i.e.,  with a fixed $\alpha$ but with $\beta=\beta_n^{FR}$ as the momentum coefficient at each step. This has two potential benefits. As a generalization of NCG, this may preserve some convergence properties of NCG. As a momentum method, there is no need to determine or tune the hyperparameter for momentum. We have chosen the Fletcher-Reeves formula $\beta_n^{FR}$ for its simplicity and robustness, as indicated by our preliminary numerical testing. We call the resulting algorithms FRGD/FRSGD, which we state as follows. Starting with $\vw_0$, we set $\vr_{-1}=\mathbf{0}$ and $\beta_{0} =0$ and iterate for $n\ge 0$ as follows: 

\begin{align}
\label{eq:FRGD}
\vr_{n} = \rr_n+ \beta_n \vr_{n-1},\\ \nonumber 
\vw_{n+1} = \vw_{n} - \alpha \vr_{n},
\end{align}
where 
\begin{itemize}
\item FRGD:
{$$ \rr_n = \nabla f(\vw_{n})$$ and 
 $$ \beta_{n} = (\rr_n^T\rr_n)/(\rr_{n-1}^T\rr_{n-1});$$}
\item FRSGD:
{$$ \rr_n = \frac{1}{m} {\sum_{j=1}^m} \nabla f_{i_j}(\vw_{n})$$ and 
 $$ \beta_{n} = (\rr_n^T\rr_n)/(\rr_{n-1}^T\rr_{n-1})$$.}
\end{itemize}





We have found this adaptive momentum method significantly accelerates the convergence of GD with momentum and outperforms NAG as well. The only extra computational cost over the momentum method is in computing an inner product $\rr_n^T \rr_n$ at each step, which is negligible. 

Before testing its performance in training DNNs, we first present an academic example to illustrate its potential advantage. 

\noindent {\sc Example 2.1:} We consider the following quadratic optimization problem \cite{mrtz}:
\begin{equation}\label{eq:quadratic}
{\small \min_\vw f(\vw) = \frac{1}{2}\vw^T\mathbf{L}\vw - \vw^T\vb,}
\end{equation}
where $\mathbf{L} \in \mathbb{R}^{500\times 500}$ is the Laplacian of a cycle graph, 
and $\vb$ is a $500$-dimensional vector whose first entry is $1$ and all the other entries are $0$.  It is easy to see that $f(\vw)$ is convex (not strongly convex) with Lipschitz constant $4$. We run GD, GD with momentum scaled by $0.9$ (GD $+$ Momentum), NAG, and FRGD with step size $1/4$ (the same hyperparameters as that used in \cite{mrtz}). 
As shown in Fig.~\ref{fig:quadratic}, GD $+$ Momentum converges faster than GD, while NAG speeds up GD $+$ Momentum dramatically and converges to the minimum in an oscillatory fashion. More interestingly, FRGD converges exponentially fast and significantly outperforms all other methods in this case. 


\begin{figure}[!ht]
\centering
\begin{tabular}{c}
\includegraphics[width=0.6\textwidth]{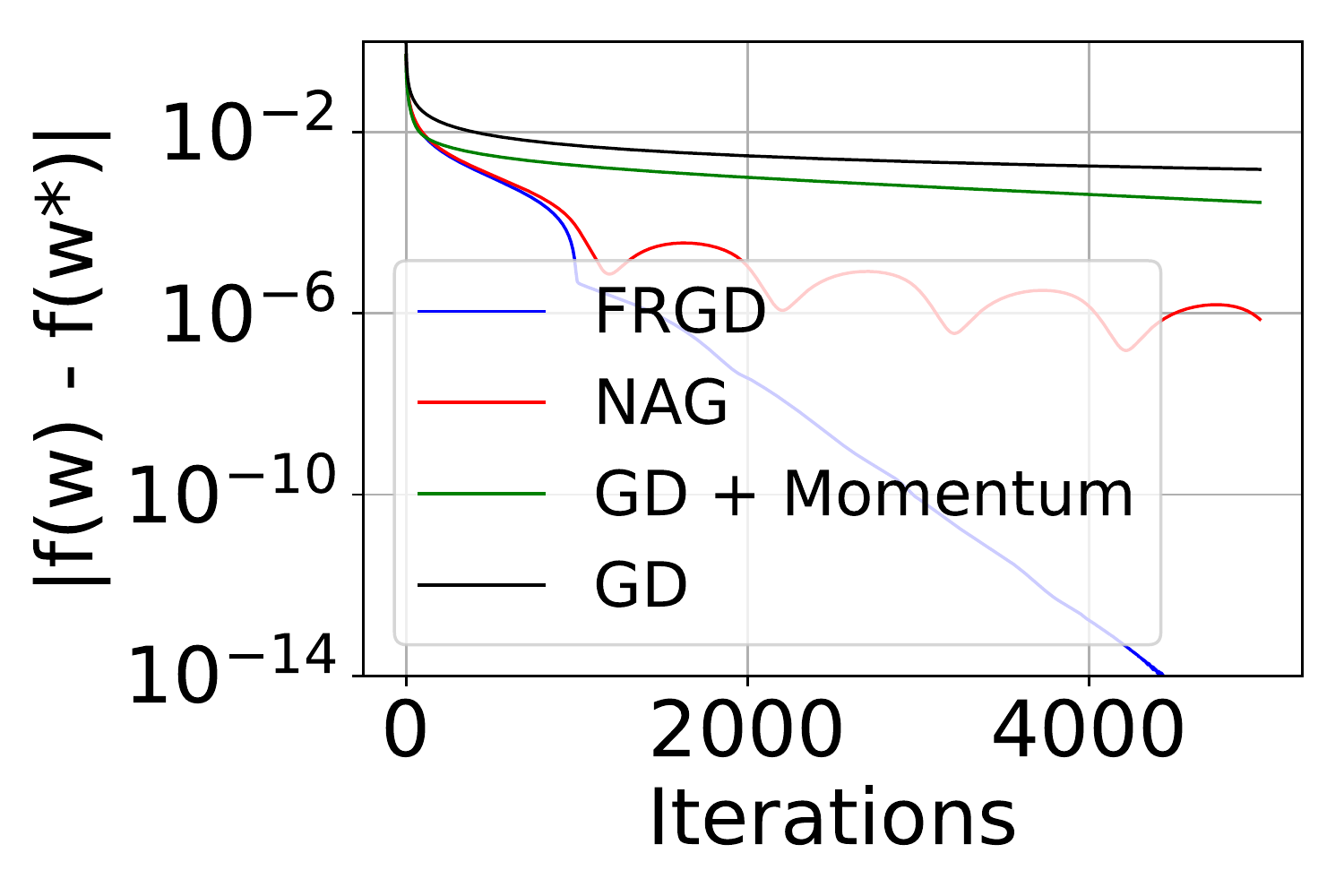}\\
\end{tabular}
\caption{Comparison between a few optimization algorithms for optimizing the quadratic function $f(\vw) = 1/2\vw^T\Lb\vw + \vb^T \vw,\  \vw, \vb\in \RR^{500}, \ \mbox{and}\ \Lb\in \RR^{500\times 500}$, where $\Lb$ is the Laplacian of a cycle graph, and $\vb$ is vector with the first entry being $1$ and all the others are $0$s \cite{mrtz}. 
  Momentum accelerates GD slightly; NAG oscillates to the minimum, $\vw^*$, and converges much faster than GD (with momentum); FRSGD converge almost exponentially fast to $\vw^*$.
}
\label{fig:quadratic}
\end{figure}

This example demonstrates that FRGD converges at a rate much faster than GD. We will present in the next section some theoretical results to demonstrate this property.



\section{Main Theory}\label{sec:Theory}
In this section, we present some convergence results to demonstrate descent property and   convergence of FRGD \eqref{eq:FRGD} under some conditions on $\alpha$. We shall focus on  strongly convex functions and quadratic functions. Our results are applicable only locally for a general function. For the case of quadratic functions, we present convergence bounds to demonstrate accelerated convergence rate. The proofs of these results will be given in Appendix~\ref{sec:appendix:proofs}. We first consider a convex function $f(\vw)$ with Lipschitz continuous Hessian matrix.

At each FRGD iteration, there exists $  \vy_i=(1-\xi_i)\p_{n} + \xi_i \p_{n+1}$ for some $\xi_i \in [0,1]$ ($1\le i \le d$) s.t.
\begin{equation}\label{eq:Hn}
{\nabla \LL (\p_{n+1})- \nabla \LL (\p_n) = \Hb_n (\p_{n+1}-\p_n),}
\end{equation}
where $\Hb_n=\left[\frac{\partial^2 \LL}{\partial w_i \partial w_j} (\vy_i) \right]_{i,j=1}^d$; see Lemma \ref{lem1} in the  Appendix~\ref{sec:appendix:proofs}.  This $\Hb_n$ is approximately a Hessian matrix.
\medskip
\begin{theorem}\label{thm1}
Consider the adaptive momentum method FRGD \eqref{eq:FRGD} for $\LL (\p): \R^d \rightarrow \R$. Assume that $\Hb_{ij}(\p):=\frac{\partial^2 \LL (\p)}{\partial w_i \partial w_j}$ is Lipschitz continuous with the Lipschiz constant $C$, i.e. $\|\Hb_{ij}(\p)- \Hb_{ij}(\tilde{\p})\| \le C \|\p-\tilde{\p}\|$ for all $1\le i, j \le d$. Let $\Hb_n$ be as defined in \eqref{eq:Hn}.
Assume for some $K>0$ that the eigenvalues of $\Hb_n$ are on $[\lambda_{\min}, \lambda_{\max}]$  with $\lambda_{\min}>0$ for $n \le K$  and $\alpha \le \frac{ \lambda_{\min}}{ (\lambda_{\max}^2+2 C d \|\rr_0\|) K^2}$. Then
\[
{\vr_n^T \rr_n > 0}
\]
and
\[
\|\rr_{n}\| \le \sqrt{1-\alpha \lambda_{\min}}  \|\rr_{n-1}\|,
\]
where $\rr_n = \nabla \LL (\p_n)$ and $n\le K$.
\end{theorem}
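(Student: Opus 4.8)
The plan is to prove the two conclusions simultaneously by induction on $n$, carrying as hypotheses that $\vr_k^T\rr_k>0$ and $\|\rr_k\|\le\rho\|\rr_{k-1}\|$ with $\rho:=\sqrt{1-\alpha\lambda_{\min}}$ for all $k\le n$. The reduction that makes everything computable is \eqref{eq:Hn}: since $\p_{n+1}-\p_n=-\alpha\vr_n$, it gives $\rr_{n+1}=\rr_n-\alpha\Hb_n\vr_n$, whence
\[
\|\rr_{n+1}\|^2=\|\rr_n\|^2-\alpha\bigl(2\,\rr_n^T\Hb_n\vr_n-\alpha\|\Hb_n\vr_n\|^2\bigr).
\]
Thus the contraction $\|\rr_{n+1}\|\le\rho\|\rr_n\|$ is \emph{equivalent} to the scalar inequality $2\,\rr_n^T\Hb_n\vr_n-\alpha\|\Hb_n\vr_n\|^2\ge\lambda_{\min}\|\rr_n\|^2$, and the whole argument reduces to lower bounding $\rr_n^T\Hb_n\vr_n$ and upper bounding $\|\Hb_n\vr_n\|$.

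The first ingredient I would record is the Fletcher--Reeves telescoping identity. Unrolling $\vr_n=\rr_n+\beta_n\vr_{n-1}$ and using $\prod_{j=k+1}^n\beta_j=\|\rr_n\|^2/\|\rr_k\|^2$ yields the closed form $\vr_n=\|\rr_n\|^2\sum_{k=0}^n\rr_k/\|\rr_k\|^2$. Combined with the inductive hypothesis $\|\rr_k\|\ge\|\rr_n\|$ for $k\le n$, this gives the a priori direction bound $\|\vr_n\|\le(n+1)\|\rr_n\|\le(K+1)\|\rr_n\|$, and in particular $\|\Hb_n\vr_n\|^2\le\lambda_{\max}^2(K+1)^2\|\rr_n\|^2$. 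This is the source of the $K^2$ in the step-size threshold.

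The crux is the lower bound on $\rr_n^T\Hb_n\vr_n=\|\rr_n\|^2\sum_{k=0}^n\rr_n^T\Hb_n\rr_k/\|\rr_k\|^2$. The diagonal term $k=n$ contributes $\rr_n^T\Hb_n\rr_n\ge\lambda_{\min}\|\rr_n\|^2$; the danger is that the momentum cross terms $\rr_n^T\Hb_n\rr_k$ ($k<n$) could be negative and overwhelm it, and a crude Cauchy--Schwarz bound is fatally lossy here. The observation that rescues the argument is that successive gradients stay aligned: from $\rr_n-\rr_k=-\alpha\sum_{j=k}^{n-1}\Hb_j\vr_j$ together with the direction bound one gets $\|\rr_n-\rr_k\|\le\alpha\lambda_{\max}(K+1)^2\|\rr_k\|$, which is small precisely when $\alpha$ is below the stated threshold. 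Writing $\rr_n^T\Hb_n\rr_k=\rr_k^T\Hb_n\rr_k+(\rr_n-\rr_k)^T\Hb_n\rr_k\ge(\lambda_{\min}-\alpha\lambda_{\max}^2(K+1)^2)\|\rr_k\|^2$, each cross term is then nonnegative, so it may be dropped and $\rr_n^T\Hb_n\vr_n\ge\lambda_{\min}\|\rr_n\|^2$ survives. Feeding this and the direction bound into the scalar inequality closes the contraction, and I expect this cross-term/alignment step to be the main obstacle.

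The descent claim $\vr_n^T\rr_n>0$ is comparatively easy: from the recursion $\vr_n^T\rr_n=\|\rr_n\|^2+\beta_n(\vr_{n-1}^T\rr_{n-1})-\alpha\beta_n\,\vr_{n-1}^T\Hb_{n-1}\vr_{n-1}$ the middle term is nonnegative by induction and the last is at most $\alpha\lambda_{\max}(K+1)^2\|\rr_n\|^2$, so positivity again follows from the smallness of $\alpha$, with the base case immediate since $\beta_0=0$ forces $\vr_0=\rr_0$. Finally, I should flag the one genuinely technical subtlety I have glossed: $\Hb_n$ from \eqref{eq:Hn} is \emph{not} symmetric, as its rows are evaluated at different points $\vy_i$, so the eigenvalue hypothesis alone does not license the bounds $\rr^T\Hb_n\rr\ge\lambda_{\min}\|\rr\|^2$ and $\|\Hb_n\rr\|\le\lambda_{\max}\|\rr\|$ used above. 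I would handle this by comparing $\Hb_n$ with the true symmetric Hessian $\nabla^2\LL(\p_n)$: since each $\vy_i$ lies on the segment $[\p_n,\p_{n+1}]$ of length $\alpha\|\vr_n\|$, Lipschitz continuity of the second derivatives gives $\|\Hb_n-\nabla^2\LL(\p_n)\|\le dC\alpha\|\vr_n\|\le dC\alpha(K+1)\|\rr_0\|$. Propagating this perturbation through the lower and upper bounds is exactly what the extra $2Cd\|\rr_0\|$ term in the denominator of the step-size condition is there to absorb.
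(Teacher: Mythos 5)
Your proposal is correct in substance, but it takes a genuinely different route from the paper's proof. The paper runs an induction carrying three coupled claims: the Rayleigh-type bound $\rr_{n-1}^T\Hb_{n-1}\pp_{n-1}\ge\lambda_{\min}\|\rr_{n-1}\|^2$, the direction bound $\|\pp_{n-1}\|\le n\|\rr_{n-1}\|$, and the contraction. Its treatment of the momentum cross term is local in $n$: it rewrites $\rr_n^T\Hb_n\pp_n$ in terms of the previous step's quantity $\rr_{n-1}^T\Hb_{n-1}\pp_{n-1}$ via $\rr_n=\rr_{n-1}-\alpha\Hb_{n-1}\pp_{n-1}$, and pays for replacing $\Hb_{n-1}$ by $\Hb_n$ with the Lipschitz perturbation bound $\|\Hb_n-\Hb_{n-1}\|\le Cd\alpha(\|\pp_n\|+\|\pp_{n-1}\|)$ from Lemma~\ref{lem1}; that comparison of consecutive $\Hb$'s is exactly where the $2Cd\|\rr_0\|$ in the step-size threshold gets consumed. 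You instead unroll the Fletcher--Reeves recursion into the closed form $\pp_n=\|\rr_n\|^2\sum_{k=0}^n\rr_k/\|\rr_k\|^2$, expand $\rr_n^T\Hb_n\pp_n$ over past gradients, and drop every cross term after showing it is individually nonnegative via the gradient-alignment estimate on $\|\rr_n-\rr_k\|$. This needs no inter-step Hessian comparison at all; granting the quadratic-form and norm bounds on $\Hb_n$ (see below), your argument proves the theorem under the milder threshold $\alpha\le\lambda_{\min}/(\lambda_{\max}^2K^2)$, with the Lipschitz constant playing no quantitative role. That is a cleaner mechanism and a mild strengthening; what the paper's local induction buys is that it produces the stated constant directly.

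Two repairs are needed to make your sketch airtight. First, constants: as written, your $(K+1)^2$ factors do not fit under the stated hypothesis --- nonnegativity of the cross terms as you bound them needs $\alpha\le\lambda_{\min}/\bigl(\lambda_{\max}^2(K+1)^2\bigr)$, which is \emph{not} implied by $\alpha\le\lambda_{\min}/\bigl((\lambda_{\max}^2+2Cd\|\rr_0\|)K^2\bigr)$ when $C$ is small. The fix is bookkeeping: proving the contraction $\|\rr_{n+1}\|\le\sqrt{1-\alpha\lambda_{\min}}\,\|\rr_n\|$ for $n+1\le K$ only ever invokes directions $\pp_j$ with $j\le K-1$, so $\|\pp_j\|\le(j+1)\|\rr_j\|\le K\|\rr_j\|$ and the alignment sum gives $\|\rr_n-\rr_k\|\le\alpha\lambda_{\max}(K^2/2)\|\rr_k\|$; with these, cross-term nonnegativity needs only $\alpha\le 2\lambda_{\min}/(\lambda_{\max}^2K^2)$ and the final contraction step needs $\alpha\le\lambda_{\min}/(\lambda_{\max}^2K^2)$, both implied by the stated bound. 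Second, the symmetry point you flag: you are right that $\Hb_n$ is not symmetric, and it is worth knowing that the paper's own proof silently uses $\vv^T\Hb_n\vv\ge\lambda_{\min}\|\vv\|^2$ and $\|\Hb_n\|\le\lambda_{\max}$ with no repair whatsoever. Your proposed fix (compare $\Hb_n$ with $\nabla^2\LL(\p_n)$ via Lipschitz continuity) is sensible but transfers spectra in the wrong direction: it requires eigenvalue bounds on the symmetric matrix $\nabla^2\LL(\p_n)$, whereas the hypothesis bounds the eigenvalues of the non-symmetric $\Hb_n$, and Bauer--Fike-type perturbation arguments pass from the normal matrix to its perturbation, not back. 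The clean resolution, for your proof and the paper's alike, is to read the hypothesis as asserting the quadratic-form and operator-norm bounds on $\Hb_n$ directly (equivalently, to pose the spectral assumption on the true Hessians along the segments).
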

\medskip 

The theorem shows that $\vr_n$ is a descent direction and $\rr_n$ converges monotonically with a rate of at least $\sqrt{1-\alpha \lambda_{\min}}$.  Although such properties are expected for GD, it is important that with the adaptive momentum, FRGD maintains these properties. However, NAG does not converge monotonically to the minimum; instead it oscillates.
Moreover, it is worth noting that: 1) our assumption on the Hessian is different from the convergence theory of GD, we need Lipschitz Hessian while GD requires bounded Hessian; 2) both the step size constraint and the convergence rate, $\sqrt{1-\alpha \lambda_{min}}$, are also different from that of GD.


\subsection{
Quadratic Functions}
To further study the convergence rate, we consider a quadratic function $\LL (\vw)= \frac{1}{2}\vw^T \Ab\vw - \vb^T\vw$, where $\Ab$ is a symmetric positive definite matrix. In this case, the problem reduces to a linear system $\Ab\vw=\vb$. Then the classical CG method has been studied extensively and strong convergence results exist. With a fixed $\alpha$, most properties that the analysis of CG relies on no longer hold. Fortunately, a techniques used for the analysis of inexact CG  due to round off errors or inexact preconditioning~\cite{doi:10.1137/S1064827597323415,Tong2000AnalysisOT} can be adapted to our method. Using coupled two-term recurrences, our adaptive momentum becomes a Krylov subspace method, and we can derive the following convergence bound.
\medskip 
\begin{theorem}\label{thm2}
Consider  the adaptive momentum method FRGD \eqref{eq:FRGD} for $\LL (\vw)= \frac{1}{2}\vw^T \Ab\vw - \vb^T\vw$. Let $\Zb_n = [\vz_0, \vz_1, \cdots, \vz_{n-1}]$ where $\vz_i=\rr_i/\|\rr_i\|$. 
If  $\vz_0, \vz_1, \cdots, \vz_n$ 
are linearly independent,  then
\begin{equation}\label{eq:convergence:quadratic}
{ \|  \rr_{n}\| \le
2(1+K_n) \left( \frac{\sqrt{\kappa}-1}{\sqrt{\kappa}+1}\right)^n \| \rr_0\|,}
\end{equation}
where $K_n \le n(1+ n\rho/2) \|\Ab\| \kappa (\Zb_{n+1})$, $\rho=\max_{0\le j <i \le n-1} \|\rr_{i}\|^2/\|\rr_{j}\|^2$,  $\kappa$ is the spectral condition number of $\Ab$, and $\kappa (\Zb_{n+1})$ is the spectral condition number of $\Zb_{n+1}$.
\end{theorem}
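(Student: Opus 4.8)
The plan is to recognize FRGD on a quadratic as an (inexact) Krylov subspace method and to adapt the finite-precision/inexact CG analysis of \cite{doi:10.1137/S1064827597323415,Tong2000AnalysisOT}. First I would convert the coupled two-term recurrences of \eqref{eq:FRGD} into a single three-term recurrence for the gradients. Since $\rr_n=\nabla\LL(\p_n)=\Ab\p_n-\vb$, the position update gives $\rr_{n+1}=\rr_n-\alpha\Ab\vr_n$; eliminating $\vr_n$ via $\vr_n=\rr_n+\beta_n\vr_{n-1}$ and using $\alpha\Ab\vr_{n-1}=\rr_{n-1}-\rr_n$ yields $\rr_{n+1}=(1+\beta_n)\rr_n-\alpha\Ab\rr_n-\beta_n\rr_{n-1}$. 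Iterating shows $\rr_n=\phi_n(\Ab)\rr_0$ for a polynomial $\phi_n$ of degree $n$ with $\phi_n(0)=1$, so FRGD builds its iterates in the Krylov space $\mathrm{span}\{\rr_0,\Ab\rr_0,\dots,\Ab^{n-1}\rr_0\}$. Writing $\vz_i=\rr_i/\|\rr_i\|$ and $\Zb_n=[\vz_0,\dots,\vz_{n-1}]$, the three-term recurrence becomes the matrix identity $\Ab\Zb_n=\Zb_n\mJ_n+\tau_n\vz_n\ve_n^\t$, where $\mJ_n$ is the explicit tridiagonal matrix with diagonal $(1+\beta_j)/\alpha$ and off-diagonals $-\beta_j\|\rr_{j-1}\|/(\alpha\|\rr_j\|)$ and $-\|\rr_{j+1}\|/(\alpha\|\rr_j\|)$, and $\tau_n=-\|\rr_n\|/(\alpha\|\rr_{n-1}\|)$.

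Two structural facts drive the estimate. From the tridiagonal form one checks by induction that $\Ab^k\vz_0=\Zb_n\mJ_n^k\ve_1$ for $k\le n-1$, hence $p(\Ab)\rr_0=\|\rr_0\|\,\Zb_n\,p(\mJ_n)\ve_1$ for every polynomial $p$ of degree at most $n-1$. Moreover $\phi_n$ is, up to the normalization $\phi_n(0)=1$, the characteristic polynomial of $\mJ_n$, so $\phi_n(\mJ_n)=0$ by Cayley--Hamilton. The linear independence hypothesis on $\vz_0,\dots,\vz_n$ is exactly what guarantees that $\Zb_{n+1}$ has full column rank (so $\kappa(\Zb_{n+1})<\infty$ and the Gram matrix is invertible) and that the recurrence does not break down, making this correspondence valid.

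Next I would compare $\phi_n$ with the degree-$n$ shifted-and-scaled Chebyshev polynomial $q_n$ on $[\lambda_{\min},\lambda_{\max}]$ normalized by $q_n(0)=1$, for which $\max_{[\lambda_{\min},\lambda_{\max}]}|q_n|\le 2((\sqrt{\kappa}-1)/(\sqrt{\kappa}+1))^n$. Since $q_n-\phi_n$ vanishes at $0$, write $q_n-\phi_n=\lambda h(\lambda)$ with $\deg h\le n-1$; then $\phi_n(\mJ_n)=0$ forces $\mJ_n h(\mJ_n)=q_n(\mJ_n)$. Applying the degree-$(n-1)$ relation to $h$ together with $\Ab\Zb_n=\Zb_n\mJ_n+\tau_n\vz_n\ve_n^\t$ produces the identity
\[
\rr_n=q_n(\Ab)\rr_0-\|\rr_0\|\,\Zb_n q_n(\mJ_n)\ve_1-\|\rr_0\|\,\tau_n\big(\ve_n^\t h(\mJ_n)\ve_1\big)\vz_n .
\]
The first term is bounded by $2((\sqrt{\kappa}-1)/(\sqrt{\kappa}+1))^n\|\rr_0\|$ because $\Ab$ is symmetric; the remaining two terms form the correction, whose size will be collected into the factor $K_n$.

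The hard part is bounding $q_n(\mJ_n)$ and $h(\mJ_n)$, since $\mJ_n$ is nonsymmetric and $\|p(\mJ_n)\|\ne\max_{\mathrm{spec}}|p|$ in general. The key is to symmetrize through the Gram matrix: with $\mG_n=\Zb_n^\t\Zb_n$ and $\mM_n=\Zb_n^\t\Ab\Zb_n$, left-multiplying the matrix identity by $\Zb_n^\t$ gives $\mM_n=\mG_n\mJ_n+\tau_n(\Zb_n^\t\vz_n)\ve_n^\t$, so $\mJ_n=\mG_n^{-1}\mM_n$ up to a rank-one term. The matrix $\mG_n^{-1}\mM_n$ is similar, via $\mG_n^{1/2}$, to the symmetric $\mG_n^{-1/2}\mM_n\mG_n^{-1/2}$, whose eigenvalues are Rayleigh quotients of $\Ab$ over $\mathrm{range}(\Zb_n)$ and therefore lie in $[\lambda_{\min},\lambda_{\max}]$; the conditioning of this similarity is $\sqrt{\kappa(\mG_n)}=\kappa(\Zb_n)$. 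Hence $\|p(\mJ_n)\|\le\kappa(\Zb_{n+1})\max_{[\lambda_{\min},\lambda_{\max}]}|p|$ once the rank-one correction and the passage from $\Zb_n$ to $\Zb_{n+1}$ are absorbed. Feeding $p=q_n$ and $p=h$ (bounding $h$ through $h(\mJ_n)=\mJ_n^{-1}q_n(\mJ_n)$ and $\|\mJ_n^{-1}\|\lesssim 1/\lambda_{\min}$) into the correction, and tracking the scalar factors---$|\tau_n|$ and the entries of $\mJ_n$ contribute the $\|\Ab\|$ factor, the residual-norm ratios $\|\rr_i\|^2/\|\rr_j\|^2$ contribute $\rho$, and summing the $n$ polynomial terms contributes $n(1+n\rho/2)$---assembles the stated $K_n\le n(1+n\rho/2)\|\Ab\|\kappa(\Zb_{n+1})$. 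I expect this last bookkeeping step, and in particular extracting the precise dependence on $\kappa(\Zb_{n+1})$, $\rho$, and the degree $n$ while controlling the rank-one and non-normality corrections, to be the main obstacle.
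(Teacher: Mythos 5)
The first half of your outline is correct and is genuinely the same strategy as the paper's: the coupled recurrences do give the exact relation $\Ab\Zb_n=\Zb_n\mJ_n+\tau_n\vz_n\ve_n^T$ with $\mJ_n$ tridiagonal, $\rr_n=\phi_n(\Ab)\rr_0$ with $\phi_n(0)=1$, $p(\Ab)\rr_0=\|\rr_0\|\Zb_n p(\mJ_n)\ve_1$ for $\deg p\le n-1$, and, under the linear-independence hypothesis, $\phi_n$ is proportional to the characteristic polynomial of $\mJ_n$, so your displayed identity for $\rr_n$ is exact. The genuine gap is the step you yourself flag as the obstacle, and it is not bookkeeping: your mechanism for bounding $\|q_n(\mJ_n)\|$ and $h(\mJ_n)$ fails. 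Writing $\mG_n=\Zb_n^T\Zb_n$ and $\mM_n=\Zb_n^T\Ab\Zb_n$, it is true that $\mG_n^{-1/2}\mM_n\mG_n^{-1/2}$ has spectrum in $[\lambda_{\min},\lambda_{\max}]$, but $\mJ_n=\mG_n^{-1}\mM_n-\tau_n\mG_n^{-1}(\Zb_n^T\vz_n)\ve_n^T$, and since the basis is \emph{not} orthogonal, $\Zb_n^T\vz_n\neq\vzero$: this rank-one discrepancy enters every power of the matrix, so $p(\mJ_n)$ and $p(\mG_n^{-1}\mM_n)$ are not comparable for a degree-$n$ polynomial, and it cannot be "absorbed." Indeed, by your own Cayley--Hamilton observation the eigenvalues of $\mJ_n$ are exactly the roots of the residual polynomial $\phi_n$, and for fixed-step recurrences these are not Rayleigh quotients of $\Ab$ and need not lie in $[\lambda_{\min},\lambda_{\max}]$ at all (in the degenerate case $\beta_n=0$ one gets $\phi_n(\lambda)=(1-\alpha\lambda)^n$, all roots at $1/\alpha\ge\lambda_{\max}$, even though $\mG_n^{-1}\mM_n$ still has spectrum inside the interval). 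Consequently the claimed bound $\|p(\mJ_n)\|\le\kappa(\Zb_{n+1})\max_{[\lambda_{\min},\lambda_{\max}]}|p|$, the estimate $\|\mJ_n^{-1}\|\lesssim 1/\lambda_{\min}$, and the final assembly of $K_n$ are all unsupported.

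What rescues the argument --- and is exactly what the paper does via Theorem 3.5 of Tong--Ye --- is to never evaluate a polynomial at the projected matrix. For any $p$ with $p(0)=1$, $\deg p\le n$, write $p(\lambda)=1-\lambda q(\lambda)$; then $q(\Ab)\rr_0=\Zb_n\vy$ for some $\vy$, and combining $\rr_0=\|\rr_0\|\Zb_n\ve_1$, the Krylov relation, and $\Zb_{n+1}^{+}\Zb_{n+1}=\Ib$ yields the operator identity $\rr_n=p(\Ab)\rr_0-\Ab\Zb_n\Tb_n^{-1}[\Ib_n\;\;0]\,\Zb_{n+1}^{+}\,p(\Ab)\rr_0$, where $\Tb_n$ is the paper's tridiagonal matrix (your $\mJ_n$). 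The correction is a \emph{fixed linear map applied to} $p(\Ab)\rr_0$, so $\|\rr_n\|\le(1+K_n)\|p(\Ab)\rr_0\|$ with $K_n\le\|\Ab\|\,\|\Tb_n^{-1}\|\,\|\Zb_n\|\,\|\Zb_{n+1}^{+}\|$; the Chebyshev estimate is applied only to the symmetric matrix $\Ab$, and the sole structured object to control is $\Tb_n^{-1}$, not $p(\Tb_n)$. That inverse is explicit: $\Tb_n=\frac{1}{\alpha}\hat{\Lb}_n\hat{\Lb}_n^T$ with $\hat{\Lb}_n$ unit lower bidiagonal with subdiagonal entries $-\sqrt{\beta_i}$, so the $(i,j)$ entry of $\hat{\Lb}_n^{-1}$ is $\|\rr_{i-1}\|/\|\rr_{j-1}\|$, which is where $\rho$ and the factor $n(1+n\rho/2)$ actually come from. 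If you replace your spectral bound on $p(\mJ_n)$ by this operator identity, the rest of your outline becomes the paper's proof.
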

\medskip
The bound in \eqref{eq:convergence:quadratic} contains a linearly converging term with the rate $(\sqrt{\kappa}-1)/(\sqrt{\kappa}+1)$, 
but it also depends on the term $K_n$, which may grow with $n$. The key term in $K_n$ is $\kappa(\Zb_{n+1}) =\| \Zb_{n+1}\| \| \Zb_{n+1}^+\| \approx \sqrt{n+1} \| \Zb_{n+1}^+\|$, which measures linear independence among $\vz_0, \vz_1, \cdots, \vz_n$. Thus, as long as $\vz_0, \vz_1, \cdots, \vz_n$ does not completely lose linear independence, $K_n$ may be a modestly increasing term so that $\|  \rr_{n}\| $ converges at a rate close to $(\sqrt{\kappa}-1)/(\sqrt{\kappa}+1)$. 
Note that $\rho$ may be expected to be bounded. In particular, if $\|\rr_n\|$ is monotonic, which holds under the condition of Theorem \ref{thm1}, then $\rho \le 1$. 

Note that the classical CG method converges at the rate of $(\sqrt{\kappa}-1)/(\sqrt{\kappa}+1)$ and so does the momentum method with the following optimal $\alpha$ and $\beta$:
\[
{\alpha =\frac{4}{(\sqrt{\lambda_{\max}}+\sqrt{\lambda_{\min}})^2}, }
\]
\[
\beta=\frac{(\sqrt{\lambda_{\max}}-\sqrt{\lambda_{\min}})^2}{(\sqrt{\lambda_{\max}}+\sqrt{\lambda_{\min}})^2},
\]
where $\lambda_{\max}$ and $\lambda_{\min}$ are the largest and the smallest eigenvalues of $A$ respectively; see \cite{Polyak1964SomeMO}. However, they require strong conditions with the former requiring variable $\alpha_n$ (see its definition in the previous section),
while the latter requiring the optimal $\alpha$ and $\beta$.  All these methods significantly accelerates GD with constant step size $\alpha$, which has at best a convergence rate of  $(\kappa-1)/(\kappa+1)$.

We remark that a slightly more general bound $ \|  \rr_{n}\| \le
(1+K_n) \min_{p\in {\cal P}_n, p(0)=1} \|p(\Ab) \rr_0\|$ holds in place of (\ref{eq:convergence:quadratic}) without the assumption that $\Ab$ is symmetric positive definite; see the proof in the Appendix.
This bound and thus our method are applicable to the more general situations of positive semi-definite $\Ab$  (Example 2.1) or the trust region problem with indefinite $\Ab$ \cite{NIPS2018_8269,doi:10.1137/16M1095056}.

We also note that the theorem also \emph{does not require any explicit condition on the learning rate $\alpha$}. Although $\alpha$ may affect the quality of the basis $\vz_0, \vz_1, \cdots, \vz_n$ generated, as long as $K_n$ increases gradually at a rate slower than  
$(\sqrt{\kappa}-1)/(\sqrt{\kappa}+1)$, 
we have convergence of $\rr_n$. This may explain the success of our method with quite large learning rates (see our numerical results in the Experiment section). 
Of course, this is only true to the extent that $\alpha$ is not so large that  the condition number of the basis generated grows unbounded.





\section{Experiments}\label{sec:Experiments}
In this section, we present numerical results to illustrate the advantage of FRSGD over the baseline SGD with constant momentum in training DNNs for image classification. We run all experiments with five independent random seeds and report the means and standard deviations of the results. 

\begin{figure*}[!th]
\centering
\begin{tabular}{cc}
\includegraphics[clip, trim=0cm 0cm 0cm 0cm, width=0.4\columnwidth]{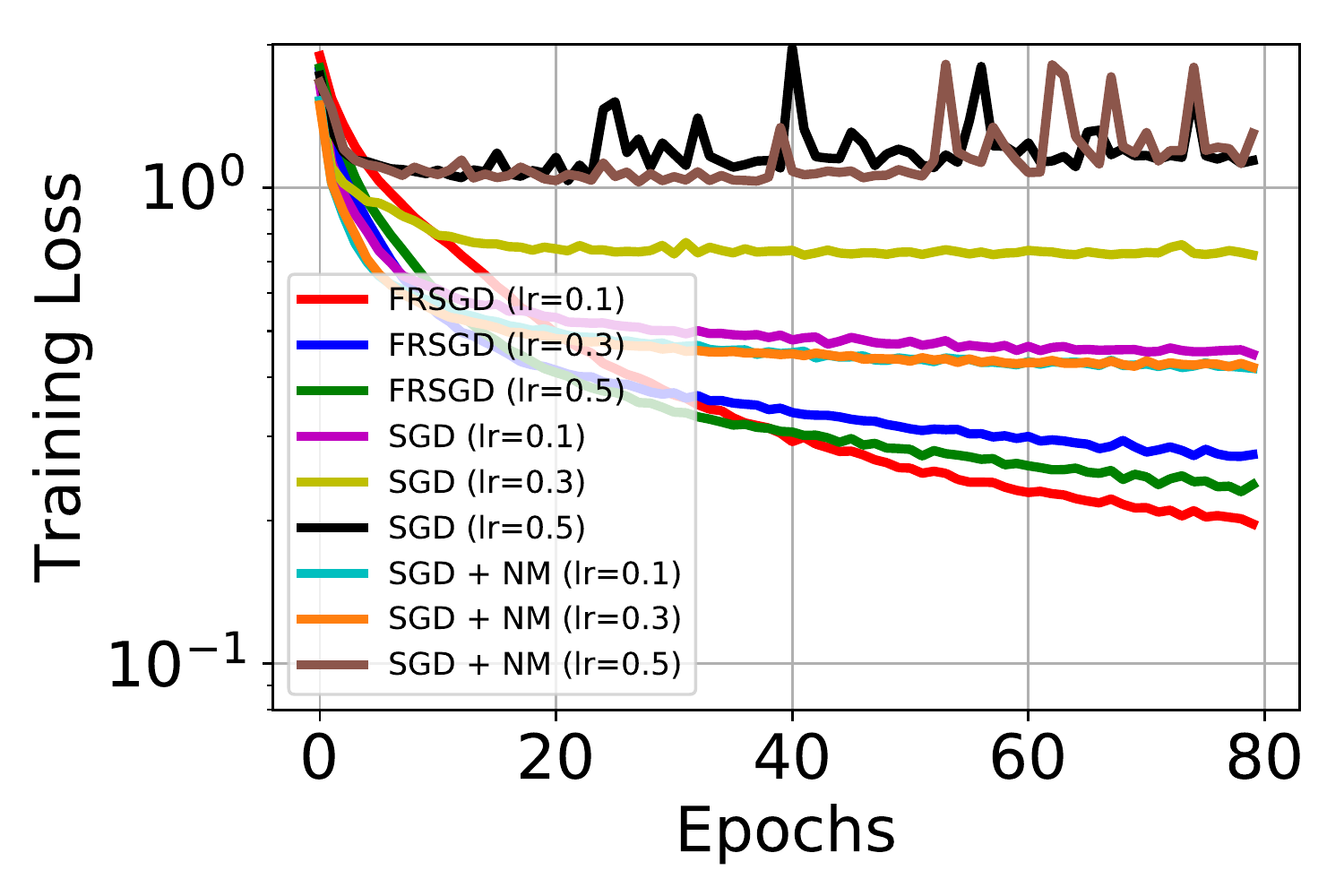}&
\includegraphics[clip, trim=0cm 0cm 0cm 0cm, width=0.4\columnwidth]{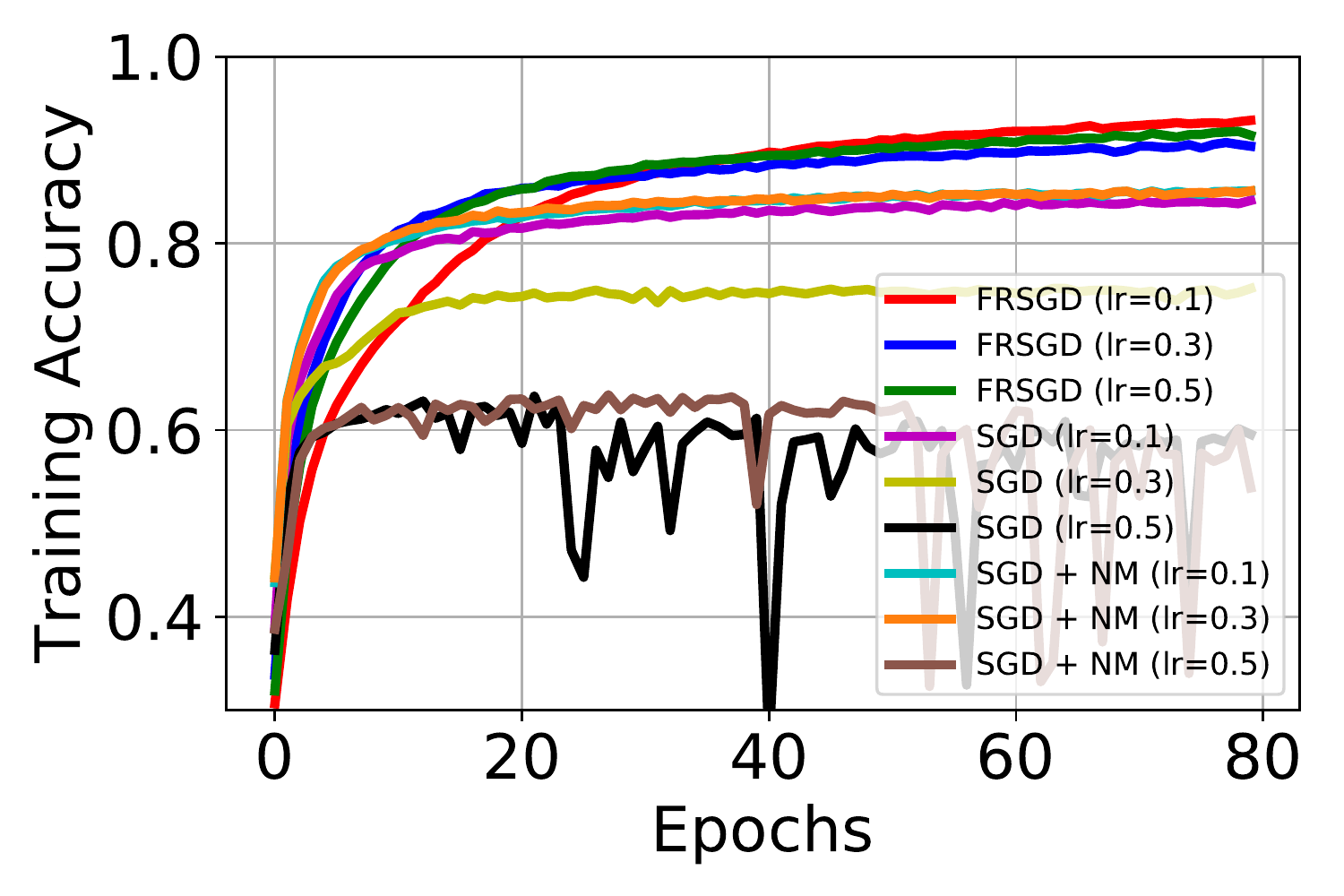}\\
\end{tabular}
\caption{Plots of epochs vs. training loss and accuracy of PreResNet56 trained by FRSGD, SGD (with momentum), and SGD $+$ NM with different learning rate. FRSGD is the most robust optimizer to large learning rate; where the performance of SGD and SGD $+$ NM deteriorate when a large learning rate is used. 
}
\label{fig:Loss:Curves:Different:LR}
\end{figure*}

\paragraph{Objective.} Our experimental results will demonstrate the following advantages of FRSGD over the baseline methods of SGD with momentum 
or Nesterov momentum: 1) FRSGD converges significantly faster; 2) FRSGD is significantly more robust to large step sizes; 3) DNN trained by FRSGD is more accurate and more adversarially robust than that trained by the baseline methods.
\paragraph{Datasets.} 
We consider {\small CIFAR10 \& CIFAR100} datasets \cite{krizhevsky2009learning} due to limited computing resources. 
Both datasets consist of 60K 32 by 32 color images with 50K/10K training/test split. CIFAR10/CIFAR100 contains 10/100 different classes, with each class having the same number of images.

\paragraph{Tasks, Experimental Settings, and Baselines.} We consider both natural and adversarial training, by solving \eqref{eq:ERM} and \eqref{eq:EARM}, respectively. We use pre-activated ResNets (PreResNets) models of different depths \cite{he2016identity}. 
As baseline methods, we consider SGD with the standard momentum and with the Nesterov momentum scaled by $0.9$, and we denote them as SGD and SGD $+$ NM, respectively, in the following context. We note that SGD is the optimizer used in the original ResNet implementations \cite{he2016deep,he2016identity}.
For the SGD and SGD $+$ NM baselines, we follow the standard-setting of ResNets by running it for $200$ epochs with an initial learning rate of $0.1$ and decay it by a factor of $10$ at the $80$th, $120$th, and $160$th epoch, respectively. For FRSGD, we run $240$ epochs with an initial learning rate of  $0.5$ and reduce it by a factor of $10$ at the $180$th, $220$th, and $230$th epoch, respectively \footnote{Here, we are able to use a larger learning rate under which FRSGD is still stable, and we decay the learning rate when the decrease of the loss function becomes slower. 
}. For adversarial training, we use the same SGD and FRSGD solvers described above to solve the outer minimization problem, and we run $10$ iterations of the iterative fast gradient sign method (IFGSM$^{10}$) attack with $\alpha=2/255$ and $\epsilon = 8/255$ to approximate the solution of the inner maximization problem. We provide the details of IFGSM \cite{Goodfellow:2014AdversarialTraining} and a few other attacks in the Appendix.




\subsection{FRSGD is Robust Under Large Learning Rates}\label{subsec:large:lr}
In this subsection, we  compare the performance of SGD, SGD $+$ NM, and FRSGD in training PreResNet56 for CIFAR10 classification using different learning rates.
We set the learning rate to be 0.1, 0.3, and 0.5, with all the other parameters the same as before. We plot epochs vs. training loss and training accuracy in Fig.~\ref{fig:Loss:Curves:Different:LR}. These results show that: 1) under the same small learning rate, e.g., 0.1, FRSGD converges remarkably faster than both SGD and SGD $+$ NM; 2) the convergence of SGD and SGD $+$ NM gets deteriorated severely when a larger learning rate is used; in particular, the training loss will not converge when 0.5 is used as the learning rate. However, FRSGD maintains convergence even when a very large learning rate is used; 3) the training loss and accuracy curves of SGD and SGD $+$ NM get plateau very quickly at a large loss, while FRSGD continues to decay.


\subsection{FRSGD Improves Accuracy of DNNs for Image Classification}
\paragraph{CIFAR10.} We consider training PreResNets with different depths using the settings mentioned before for the CIFAR10 classification. We list the test errors of different ResNets trained by different stochastic optimization algorithms in Table~\ref{tab:cifar10-acc}. In general, SGD performs on par with SGD $+$ NM; SGD $+$ NM has small advantages over SGD for training shallow DNNs. FRSGD outperforms SGD by $0.5 \sim 0.7$\% for ResNets with the depth ranging from 56 to 470. These improvements over already small error rates are significant in the relative sense, e.g., for PreResNet470, the relative error reduction is $\sim 13$\% ($4.92$\% vs. $4.27$\%).


\renewcommand\arraystretch{1.3}
\setlength{\tabcolsep}{5pt}
\begin{table*}[t!]
\caption{Test error (\%) on CIFAR10 using the SGD (with momentum), SGD $+$ NM, and FRSGD. We also include the reported results from \cite{he2016identity} (in parentheses) in addition to our reproduced results. ResNets trained by FRSGD is consistently more accurate than those trained by SGD and SGD $+$ NM.}\label{tab:cifar10-acc}
\vspace{1.0mm}
\centering
\fontsize{7.5pt}{0.95em}\selectfont
\begin{tabular}{c|c|c|c|c|c}
\hline
Network 
&SGD (baseline) 
& SGD$+$NM & FRSGD  & Improve over SGD & Improve over SGD$+$NM\\
\hline
PreResNet56 
& $6.12 \pm 0.24$ & $5.90 \pm 0.17$ & $\pmb{5.39 \pm 0.13}$ & $0.73$ &  $0.51$ \\
PreResNet110 
& $5.25 \pm 0.14$ ($6.37$) & $5.24 \pm 0.16$ & \pmb{$4.73 \pm 0.12$} & $0.52$ & $0.51$ \\
PreResNet164 
& $5.10 \pm 0.19$ ($5.46$) & $5.08 \pm 0.21$ &\pmb{$4.50 \pm 0.16$} & $0.60$ & $0.58$ \\
PreResNet290 
& $5.05 \pm 0.23$ & $5.04 \pm 0.12$ & \pmb{$4.44 \pm 0.10$} & $0.61$ & $0.60$ \\ 
PreResNet470 
& $4.92 \pm 0.10$ & $4.97 \pm 0.15$ & \pmb{$4.27 \pm 0.09$} & $0.65$ & $0.70$ \\
\hline
\end{tabular}
\end{table*}

\paragraph{CIFAR100.} Here, we consider CIFAR100 classification with the same DNNs and the same settings as those used for CIFAR10 classification. We report the test errors in Table~\ref{tab:cifar100-acc}. In this case, FRSGD improves the test accuracy over both SGD and SGD $+$ NM by $\sim 1.0$ to $1.6$\%.
\renewcommand\arraystretch{1.3}
\setlength{\tabcolsep}{5pt}
\begin{table*}[t!]
\caption{Test error (\%) on CIFAR100 using the SGD (with momentum), SGD $+$ NM, and FRSGD. We also include the reported results from \cite{he2016identity} (in parentheses) in addition to our reproduced results. ResNets trained by FRSGD is uniformly more accurate than those trained by SGD and SGD $+$ NM.}\label{tab:cifar100-acc}
\vspace{1.0mm}
\centering
\fontsize{7.5pt}{0.95em}\selectfont
\begin{tabular}{c|c|c|c|c|c}
\hline
Network 
&SGD (baseline) & SGD$+$NM & FRSGD  & Improve over SGD & Improve over SGD$+$NM\\
\hline
PreResNet56 
& $26.60 \pm 0.33$ & $26.14 \pm 0.38$ & $\pmb{25.00 \pm 0.32}$ & $1.60$ & $1.14$ \\
PreResNet110 
& $23.75 \pm 0.20$ & $23.65 \pm 0.36$ &\pmb{$22.52 \pm 0.35$} & $1.23$ & $1.13$ \\
PreResNet164 
& $22.76 \pm 0.37$  ($24.33$) & $22.79 \pm 0.29$ &\pmb{$21.38 \pm 0.34$} & $1.38$ & $1.41$ \\
PreResNet290 
& $21.78 \pm 0.21$ & $21.68 \pm 0.21$ & \pmb{$20.66 \pm 0.31$} & $1.12$ & $1.02$ \\
PreResNet470 
& $21.43 \pm 0.30$ & $21.21 \pm 0.30$ & \pmb{$19.92 \pm 0.29$} & $1.51$ & $1.29$ \\
\hline
\end{tabular}
\end{table*}


\renewcommand\arraystretch{1.3}
\setlength{\tabcolsep}{5pt}
\begin{table*}[t!]
\caption{Lists of the optimal training/test loss and accuracy of PreResNet110 trained by FRSGD, SGD (with momentum), SGD $+$ NM, and Adam with $240$ epochs. Adam has smaller training loss than the others, but the PreResNet110 trained by FRSGD has the smallest test loss/error.}\label{tab:compare:240epochs}
\vspace{1.0mm}
\centering
\fontsize{7.5pt}{0.95em}\selectfont
\begin{tabular}{c|c|c|c|c}
\hline
\ \ \ Optimizer\ \ \  & \ \ \  Training Loss\ \ \  &\ \ \  Training Error Rate (\%)\ \ \  &\ \ \  Test Loss\ \ \  &\ \ \  Test Error rate (\%)\ \ \  \\
\hline
SGD  & $0.00529 \pm 0.00043$ & $0.042 \pm 0.006$ & $0.1950 \pm 0.00091$ & $5.23 \pm 0.15$ \\
SGD $+$ NM  & $0.00462 \pm 0.00047$ & $0.032 \pm 0.005$ & $0.1846 \pm 0.00101$ & $5.19 \pm 0.16$ \\
Adam  & $\pmb{0.00033 \pm 0.00003}$ & $\pmb{0.002 \pm 0.002}$ & $0.3237 \pm 0.00125$ & $6.47 \pm 0.31$ \\
FRSGD  & $0.00680 \pm 0.00021$ & $0.090 \pm 0.002$ & $\pmb{0.1611 \pm 0.00086}$ & $\pmb{4.73 \pm 0.12}$ \\
\hline
\end{tabular}
\end{table*}

\paragraph{FRSGD vs. Adam and SGD with More Epochs.} 
In the original ResNet experiments, we have run SGD and SGD $+$ NM for 200 epochs, after which no training loss decay is observed. 
To compare over longer iterations, we train PreResNet110 by running SGD, SGD $+$ NM, and FRSGD for 240 epochs \footnote{Based on trial and error, we found that adding 20 epochs each in the first and second learning rate stages gives the best performance. All the reported results are based on using this setting.}. Moreover, we also compare them with running Adam \cite{kingma2014adam} for 240 epochs using a well-calibrated initial learning rate 0.003 and the same decaying schedule 
as SGD with 240 epochs, and we use the default value for all Adam's other hyperparameters. Table~\ref{tab:compare:240epochs} lists the training and testing losses as well as errors of PreResNet110 trained by different optimizers on CIFAR10. 
The best test error of SGD, SGD $+$ NM, and Adam for CIFAR10 classification 
are $5.23 \pm 0.15$\%, $5.19 \pm 0.16$\%, and $6.47 \pm 0.31$\% respectively, compared with $4.73 \pm 0.12$ for FRSGD.  We see that adding 40 more epochs to SGD and SGD $+$ NM does not improve classification accuracy much, and this is because the training loss has reached the plateau at each stage with a budget of 200 epochs. Adam converges faster with a smaller final training loss, but the testing loss and accuracy are far behind those of SGD, SGD $+$ NM, and FRSGD.

\renewcommand\arraystretch{1.3}
\setlength{\tabcolsep}{5pt}
\begin{table*}[t!]
\caption{ 
Test accuracy (\%) of PreResNet110 on CIFAR10 using PGD adversarial training with SGD $+$ NM and FRSGD as the outer solver. FRSGD improves accuracies for classifying both clean and adversarial images.}\label{tab:cifar10-acc-robust-resnet110}
\vspace{1.0mm}
\centering
\fontsize{7.5pt}{0.95em}\selectfont
\begin{tabular}{c|c|c|c|c|c|c|c}
\hline
Optimizer &  Natural & FGSM & IFGSM$^{10}$ & IFGSM$^{20}$ & IFGSM$^{40}$ & IFGSM$^{100}$ & C\&W \\
\hline
SGD $+$NM  & $82.19 \pm 0.29$ & $57.61 \pm 0.33$ & $55.35 \pm 0.42$ & $52.02 \pm 0.34$ & $51.45 \pm 0.33$ & $51.08 \pm 0.35$ & $62.92 \pm 0.50$\\
FRSGD & \pmb{$82.36 \pm 0.27$} & \pmb{$58.27 \pm 0.29$} & \pmb{$55.83 \pm 0.31$} & \pmb{$53.07 \pm 0.28$} & \pmb{$52.39 \pm 0.25$} & \pmb{$52.15 \pm 0.19$} & \pmb{$63.05 \pm 0.33$}\\
\hline
\end{tabular}
\end{table*}



\renewcommand\arraystretch{1.3}
\setlength{\tabcolsep}{5pt}
\begin{table*}[t!]
\caption{ 
Test accuracy (\%) of PreResNet110 on CIFAR100 using PGD adversarial training with SGD $+$ NM and FRSGD as the outer solver. FRSGD improves accuracies for classifying both clean and adversarial images.}\label{tab:cifar100-acc-robust-resnet110}
\vspace{1.0mm}
\centering
\fontsize{7.5pt}{0.95em}\selectfont
\begin{tabular}{c|c|c|c|c|c|c|c}
\hline
Optimizer &  Clean & FGSM & IFGSM$^{10}$ & IFGSM$^{20}$ & IFGSM$^{40}$ & IFGSM$^{100}$ & C\&W \\
\hline
SGD & $54.75 \pm 0.52$ & $30.75 \pm 0.41$ & $29.61 \pm 0.45$ & $27.87 \pm 0.44$ & $27.51 \pm 0.42$ & $27.40 \pm 0.48$ & $38.97 \pm 0.66$\\
FRSGD & \pmb{$54.95 \pm 0.49$} & \pmb{$31.77 \pm 0.43$} & \pmb{$30.79 \pm 0.33$} & \pmb{$29.32 \pm 0.37$} & \pmb{$29.09 \pm 0.40$} & \pmb{$29.01 \pm 0.39$} & \pmb{$39.01 \pm 0.50$}\\
\hline
\end{tabular}
\end{table*}
\medskip

\paragraph{Training FRSGD with a Large Number of Epochs.} 
In the previous experiments, considering the training efficiency, we limited the budget for training epochs of FRSGD by dropping the learning rate when the training loss convergence slows down but before reaching plateaus. This learning rate reduction may be premature and the result may not be the best accuracy our method can achieve.  
In this experiment, we relax this budget and use a much larger number of epochs for FRSGD and see if we can get more improvement in classification accuracy. In particular, we train PreResNet110/PreResNet290 by running 400 epochs of FRSGD with an initial learning rate of 0.5 and reduce the learning rate by a factor of 10 at the 200th, 300th, and 350th epochs, respectively. 

In this setting, we get the best test error rates of $4.64 \pm 0.12/4.26 \pm 0.09$\%  for CIFAR10 and $20.03 \pm 0.29/19.89 \pm 0.19$\% for CIFAR100, which  remarkably improves what we get by using 240 epochs ($4.73 \pm 0.12/4.44 \pm 0.10$ for CIFAR10 and $22.52 \pm 0.35/20.66 \pm 0.31$ for CIFAR100). Furthermore, the training loss of FRSGD at the last epoch in training the PreResNet290 for CIFAR10 classification also becomes significantly smaller than that of SGD or SGD $+$ NM ($0.00138 \pm 0.00012$ (FRSGD), vs. $0.0048 \pm 0.0003$ (SGD), and $0.0052 \pm 0.0003$ (SGD $+$ NM)).



\subsection{FRSGD Improves Adversarial Training}
Finally, we numerically demonstrate that FRSGD can also improve the adversarial robustness of the trained DNNs through adversarial training. We train the PreResNet110 by applying the adversarial training using the settings listed before. Then we apply the well-trained PreResNet110 to classify the test set under three kinds of benchmark adversarial attacks: fast gradient sign method (FGSM), $m$ steps IFGSM (IFGSM$^{m}$ with $m=10, 20, 40, 100$) \cite{Goodfellow:2014AdversarialTraining}, and C\&W attacks \cite{CWAttack:2016}. We apply the same set of hyperparameters for these attacks as that used in \cite{Wang:2018EnResNet,madry2017towards} in the following experiments. A brief introduction of these attacks and the used hyperparameters are available in Appendix. 

Tables~\ref{tab:cifar10-acc-robust-resnet110} and \ref{tab:cifar100-acc-robust-resnet110} list the accuracy of the adversarially trained PreResNet110 for classifying CIFAR10 and CIFAR100 images with or without adversarial attacks \footnote{We only compare FRSGD with SGD $+$ NM since SGD performance is weaker than SGD $+$ NM in this case.}. First, we see that the robust PreResNet110 trained by FRSGD is slightly more accurate than that trained by SGD $+$ NM for classifying the clean CIFAR10 and CIFAR100 images without any attack, e.g., the accuracy of FRSGD is $82.36 \pm 0.27$\% and $54.95 \pm 0.49$\% for CIFAR10 and CIFAR100 classification, while the corresponding accuracy of the model trained by SGD $+$ NM is $82.19 \pm 0.29$\% and $54.75 \pm 0.52$\%, respectively. Second, the model trained by FRSGD is more robust than that trained by SGD $+$ NM under all the adversarial attacks mentioned before, e.g., under the IFGSM$^{100}$ attack, the robust accuracy of these two models are $52.15 \pm 0.19$\% vs. $51.08 \pm 0.35$\% for CIFAR10 classification, and are $27.40 \pm 0.48$\% vs. $29.01 \pm 0.39$\% for CIFAR100 classification.
Although the improvements are minor, the good performance of FRSGD in this difficult setting illustrates its robustness in different problem types. 




\section{Concluding Remarks}\label{sec:Conclusion}
In this paper, we leveraged adaptive momentum from the NCG to improve SGD, and the resulting algorithm performs surprisingly well in the following sense: 1) It can accelerate GD significantly; in particular, we observed that it achieves exponential convergence for optimizing a specific convex function; 2) It allows us to use much larger step sizes and converges faster than SGD with (Nesterov) momentum in training DNNs; 3) DNNs trained by FRSGD have remarkably higher classification accuracy and are more robust to adversarial attacks for image classification. The method is as simple as SGD and is easy to implement. It is well suited for DNN training.

There are several interesting open problems that are worth further investigations. First, can we integrate the adaptive momentum with adaptive step size to further improve stochastic optimization algorithms? Second, can we prove stronger convergence results for FRGD/FRSGD under more general conditions?  Third, can we leverage adaptive momentum to improve training DNNs for other deep learning tasks beyond image classification? These will potentially be our future works.




\clearpage

\appendix
\section{Proof of Main Theorems}\label{sec:appendix:proofs}

We first give a lemma and then prove Theorem \ref{thm1}. For convenience, we restate all the theorems. 
\begin{lemma}\label{lem1}
Assume  $\frac{\partial^2 \LL}{\partial w_i \partial w_j}$ is Lipschitz continuous with the Lipschiz constant $C$ for all $1\le i, j \le d$, i.e. $|\frac{\partial^2 \LL}{\partial w_i \partial w_j} (\p)- \frac{\partial^2 \LL}{\partial w_i \partial w_j}(\tilde{\p})| \le C \|\p -\tilde{\p}\| $. For each fixed $n\ge 0$, there exists  $\vy_i=(1-\xi_i)\p_{n} + \xi_i \p_{n+1}$ for some $\xi_i \in [0,1]$ ($1\le i \le d$) s.t.
\[
\nabla \LL (\p_{n+1})- \nabla \LL (\p_n) = \Hb_n (\p_{n+1}-\p_n) \;\;\mbox{ where }\;\;
H_n=\left[\frac{\partial^2 \LL}{\partial w_i \partial w_j} (\vy_i) \right]_{i,j=1}^d.
\]
Furthermore, 
\begin{equation}\label{eq:diffH}
\|\Hb_{n+1} - \Hb_n\| \le C d (\|\p_{n+2}- \p_{n+1}\|+ \|\p_{n+1}-\p_n\| ).
\end{equation}
\end{lemma}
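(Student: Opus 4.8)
The plan is to establish the two claims separately: the identity \eqref{eq:Hn} by a componentwise mean value argument, and the bound \eqref{eq:diffH} by combining the Lipschitz hypothesis with a geometric estimate on how far apart the row-evaluation points of $\Hb_{n+1}$ and $\Hb_n$ can be.

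For the first claim, I would fix $n$ and, for each $1\le i\le d$, introduce the scalar function $g_i(t):=\frac{\partial \LL}{\partial w_i}\big((1-t)\p_n + t\p_{n+1}\big)$ on $[0,1]$. Since the second partials exist, $g_i$ is differentiable with $g_i'(t)=\sum_{j=1}^d \frac{\partial^2 \LL}{\partial w_i \partial w_j}\big((1-t)\p_n+t\p_{n+1}\big)(\p_{n+1}-\p_n)_j$ by the chain rule. The scalar mean value theorem then yields some $\xi_i\in[0,1]$ with $g_i(1)-g_i(0)=g_i'(\xi_i)$. Setting $\vy_i=(1-\xi_i)\p_n+\xi_i\p_{n+1}$, the left-hand side is the $i$th component of $\nabla \LL(\p_{n+1})-\nabla \LL(\p_n)$ while the right-hand side is the $i$th component of $\Hb_n(\p_{n+1}-\p_n)$ with $\Hb_n$ as defined, which gives \eqref{eq:Hn}. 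The only point requiring care is that each row of $\Hb_n$ is evaluated at its own point $\vy_i$; this is harmless precisely because the mean value theorem is applied one component at a time.

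For the second claim, I would write $\vy_i^{(n)}$ and $\vy_i^{(n+1)}$ for the points furnished by the first part at iterations $n$ and $n+1$, so $\vy_i^{(n)}$ lies on the segment $[\p_n,\p_{n+1}]$ and $\vy_i^{(n+1)}$ on $[\p_{n+1},\p_{n+2}]$. The $(i,j)$ entry of $\Hb_{n+1}-\Hb_n$ is $\frac{\partial^2 \LL}{\partial w_i \partial w_j}(\vy_i^{(n+1)})-\frac{\partial^2 \LL}{\partial w_i \partial w_j}(\vy_i^{(n)})$, whose absolute value is at most $C\|\vy_i^{(n+1)}-\vy_i^{(n)}\|$ by the Lipschitz hypothesis. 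Since both points straddle $\p_{n+1}$, the triangle inequality gives $\|\vy_i^{(n+1)}-\vy_i^{(n)}\|\le \|\vy_i^{(n+1)}-\p_{n+1}\|+\|\p_{n+1}-\vy_i^{(n)}\|\le \|\p_{n+2}-\p_{n+1}\|+\|\p_{n+1}-\p_n\|$, using that each $\vy_i$ is a convex combination of its segment endpoints. Hence every entry of $\Hb_{n+1}-\Hb_n$ is bounded in absolute value by $C\big(\|\p_{n+2}-\p_{n+1}\|+\|\p_{n+1}-\p_n\|\big)$.

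Finally I would pass from the entrywise bound to the operator norm by bounding the spectral norm by the Frobenius norm: since the matrix is $d\times d$, $\|\Hb_{n+1}-\Hb_n\|\le \|\Hb_{n+1}-\Hb_n\|_F\le d\,\max_{i,j}\,|(\Hb_{n+1}-\Hb_n)_{ij}|$, which delivers exactly the claimed factor $Cd$ in \eqref{eq:diffH}. I expect the main (though modest) obstacle to lie in the bookkeeping of the second part, namely keeping track that the rows of the two matrices are evaluated at distinct, iteration-dependent points $\vy_i^{(n)}$ and $\vy_i^{(n+1)}$, and verifying that each such pair straddles $\p_{n+1}$ so that the triangle inequality yields precisely the two consecutive step lengths on the right-hand side.
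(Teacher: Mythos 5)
Your proposal is correct and follows essentially the same route as the paper's proof: a componentwise mean value theorem along the segment $[\p_n,\p_{n+1}]$ to get each row of $\Hb_n$ evaluated at its own point $\vy_i$, then the entrywise Lipschitz bound with the triangle inequality across $\p_{n+1}$ (the paper writes $\vy_i^{(n+1)}-\vy_i^{(n)}=\eta_i(\p_{n+2}-\p_{n+1})+(1-\xi_i)(\p_{n+1}-\p_n)$, which is the same estimate), and finally the spectral-by-Frobenius bound giving the factor $d$. No gaps; the bookkeeping you flag is handled exactly as in the paper.
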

\begin{proof}
Write $\nabla \LL (\p) =[f_1 (\p), \cdots, f_d(\p)]^T$, where $f_i (\p) := \frac{\partial \LL}{\partial w_i}$. Then, applying Taylor's theorem to each $f_i$, we have
\[
f_i (\p_{n+1})- f_i (\p_n) =\nabla f_i (\vy_i)^T (\p_{n+1}-\p_n),
\]
where  $\vy_i=(1-\xi_i)\p_{n} + \xi_i \p_{n+1}$ for some $\xi_i \in [0,1]$, which is $n$ dependent. Since
\[
\Hb_n=\left[\frac{\partial^2 \LL}{\partial w_i \partial w_j} (\vy_i) \right]_{i,j=1}^d
=\left(
  \begin{array}{c}
    \nabla \frac{\partial \LL}{\partial w_1} (\vy_1)^T \\
    \nabla \frac{\partial \LL}{\partial w_2} (\vy_2)^T \\
    \vdots \\
    \nabla \frac{\partial \LL}{\partial w_d}(\vy_d)^T \\
  \end{array}
\right)
=\left(
  \begin{array}{c}
    \nabla f_1 (\vy_1)^T \\
    \nabla f_2 (\vy_2)^T \\
    \vdots \\
    \nabla f_d (\vy_d)^T \\
  \end{array}
\right),
\]
we have 
\[
\nabla \LL (\p_{n+1})- \nabla \LL (\p_n)=\left(
                                         \begin{array}{c}
                                           f_1 (\p_{n+1})- f_1 (\p_n) \\
                                           \vdots \\
                                           f_d (\p_{n+1})- f_d (\p_n) \\
                                         \end{array}
                                       \right)
 = \Hb_n (\p_{n+1}-\p_n).
\]
We have $\Hb_{n+1}$  defined similarly and we can write 
\[
\Hb_{n+1}=\left[\frac{\partial^2 \LL}{\partial w_i \partial w_j} (\vz_i) \right]_{i,j=1}^d
=\left[ \nabla f_1 (\vz_1), \nabla f_2 (\vz_2), \cdots, \nabla f_d (\vz_d)\right]^T,
\]
for some  $\vz_i=(1-\eta_i)\p_{n+1} + \eta_i \p_{n+2}$ with $\eta_i \in [0,1]$.
Now, by the Lipschitz continuity,
\begin{eqnarray*}
\|\nabla f_i (\vz_i)-\nabla f_i (\vy_i)\|_\infty
&\le& C \|\vz_i - \vy_i\| \\
   &=& C\|(1-\eta_i)\p_{n+1} + \eta_i \p_{n+2} - (1-\xi_i)\p_{n} - \xi_i \p_{n+1} \| \\
   &=& C\|\eta_i(\p_{n+2}-\p_{n+1}) + (1-\xi_i)(\p_{n+1}-\p_n)\| \\
   &\le& C(\eta_i\|\p_{n+2}-\p_{n+1}\|+ (1-\xi_i)\|\p_{n+1}-\p_n\| )\\
   &\le& C(\|\p_{n+2}-\p_{n+1}\|+ \|\p_{n+1}-\p_n\| ).
\end{eqnarray*}
Thus $\|\Hb_{n+1} -\Hb_n\| \le \|\Hb_{n+1} -\Hb_n\|_F \le d \max_i \|\nabla f_i (\vz_i)-\nabla f_i (\vy_i)\|_\infty \le  C d (\|\p_{n+2}-\p_{n+1}\|+ \|\p_{n+1}-\p_n\|)$, where $\|\cdot \|_F$ is the Frobenius norm of matrices.
\end{proof}

We now prove Theorem~\ref{thm1}.

\begin{theorem}[Theorem~\ref{thm1} Restate]
Consider the adaptive momentum method FRGD \eqref{eq:FRGD} for $\LL (\vw)$. Assume that $\Hb_{ij}(\vw):=\frac{\partial^2 \LL}{\partial w_i \partial w_j}$ is Lipschitz continuous with the Lipschiz constant $C$, i.e. $\|\Hb_{ij}(\vw)- \Hb_{ij}(\Tilde{\vw})\| \le C \|\vw-\Tilde{\vw}\|$ for all $1\le i, j \le d$. Let $\Hb_n$ be as defined in \eqref{eq:Hn}.
Assume for some $K>0$ that the eigenvalues of $\Hb_n$ are on $[\lambda_{\min}, \lambda_{\max}]$  with $\lambda_{\min}>0$ for $n \le K$  and $\alpha \le \frac{ \lambda_{\min}}{ (\lambda_{\max}^2+2 C d \|\rr_0\|) K^2}$. Then
\[
\rr_n^T \pp_n>0 \;\;\mbox{ and }\;\;
\|\rr_{n}\| \le \sqrt{1-\alpha \lambda_{\min}}  \|\rr_{n-1}\|,
\]
where $\rr_n = \nabla \LL (\p_n)$ and $k\le K$.
\end{theorem}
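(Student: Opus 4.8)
The plan is to run a single induction on $n$ (for $0\le n\le K$) that simultaneously establishes the monotone decrease $\|\rr_n\|\le\sqrt{1-\alpha\lambda_{\min}}\,\|\rr_{n-1}\|$ and the descent property $\rr_n^T\pp_n>0$, where I write $\rr_n=\nabla\LL(\p_n)$ for the gradient and $\pp_n$ for the search direction of \eqref{eq:FRGD}. The two facts I will lean on throughout are the mean-value identity $\rr_{n+1}-\rr_n=\Hb_n(\p_{n+1}-\p_n)=-\alpha\Hb_n\pp_n$ from Lemma~\ref{lem1} (equation \eqref{eq:Hn}), and the direction recursion $\pp_n=\rr_n+\beta_n\pp_{n-1}$ with $\beta_n=\|\rr_n\|^2/\|\rr_{n-1}\|^2$. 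A preliminary observation makes the bookkeeping manageable: since the $\beta_i$ telescope, $\pp_n=\|\rr_n\|^2\sum_{j=0}^n \rr_j/\|\rr_j\|^2$, so once the monotonicity $\|\rr_j\|\ge\|\rr_n\|$ $(j\le n)$ is available inductively, one gets the a priori size bound $\|\pp_n\|\le(n+1)\|\rr_n\|\le K\|\rr_n\|$. This linear-in-$K$ control of the search direction is what ultimately produces the $K^2$ in the admissible step size.

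The \emph{descent} half is the easy one. Expanding $\rr_n^T\pp_n=\|\rr_n\|^2+\beta_n\,\rr_n^T\pp_{n-1}$ and computing $\rr_n^T\pp_{n-1}=\rr_{n-1}^T\pp_{n-1}-\alpha\,\pp_{n-1}^T\Hb_{n-1}\pp_{n-1}$, I can discard the first summand (nonnegative by the inductive descent hypothesis at step $n-1$) and bound the quadratic term by $\alpha\lambda_{\max}\|\pp_{n-1}\|^2\le\alpha\lambda_{\max}n^2\|\rr_{n-1}\|^2$. Multiplying by $\beta_n$ gives $\rr_n^T\pp_n\ge(1-\alpha\lambda_{\max}n^2)\|\rr_n\|^2>0$ under the step-size assumption. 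Here the eigenvalue bounds on the possibly nonsymmetric $\Hb_n$ are used through its symmetric part; because the rows of $\Hb_n$ are gradients of $\partial\LL/\partial w_i$ at nearby points $\vy_i$, Lipschitz continuity gives $\|\Hb_n-\nabla^2\LL(\p_n)\|\le Cd\alpha\|\pp_n\|$, so the quadratic-form bounds hold up to a controlled perturbation that I fold into the $Cd\|\rr_0\|$ term.

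The \emph{decrease} half is the crux. Writing $\|\rr_{n+1}\|^2=\|\rr_n\|^2-2\alpha\,\rr_n^T\Hb_n\pp_n+\alpha^2\|\Hb_n\pp_n\|^2$, the last term is at most $\alpha^2\lambda_{\max}^2(n+1)^2\|\rr_n\|^2$ and is absorbed by the $\lambda_{\max}^2K^2$ in the step-size bound. In $\rr_n^T\Hb_n\pp_n=\rr_n^T\Hb_n\rr_n+\beta_n\,\rr_n^T\Hb_n\pp_{n-1}$ the diagonal piece satisfies $\rr_n^T\Hb_n\rr_n\ge\lambda_{\min}\|\rr_n\|^2$ (again via the symmetric part, up to the $Cd\|\rr_0\|$ perturbation), so everything hinges on the \emph{conjugacy-defect} term $\rr_n^T\Hb_n\pp_{n-1}$. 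I would rewrite $\Hb_n\pp_{n-1}=\alpha^{-1}(\rr_{n-1}-\rr_n)+(\Hb_n-\Hb_{n-1})\pp_{n-1}$, controlling the second summand by the Hessian-variation estimate \eqref{eq:diffH}, $\|\Hb_n-\Hb_{n-1}\|\le Cd\alpha(\|\pp_n\|+\|\pp_{n-1}\|)\le 2Cd\alpha(n+1)\|\rr_0\|$. What remains is the near-orthogonality quantity $\rr_n^T\rr_{n-1}$ (and, unwinding further, the quantities $\rr_n^T\rr_j$ for $j<n$), which vanish exactly in genuine conjugate gradients but only approximately here because $\alpha$ is fixed rather than chosen by line search.

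The main obstacle, then, is precisely the control of these near-orthogonality/conjugacy defects: a crude triangle-inequality bound on $\rr_n^T\Hb_n\pp_{n-1}$ is of size $O(\lambda_{\max}n\|\rr_n\|^2)$, which dwarfs the available $\lambda_{\min}\|\rr_n\|^2$ budget, so genuine cancellation must be extracted. My plan is to adopt the coupled two-term recurrence / Krylov viewpoint used in the analysis of inexact CG (cf.\ \cite{doi:10.1137/S1064827597323415,Tong2000AnalysisOT}) and to carry the defects $\rr_n^T\rr_j$ as additional inductive quantities, showing they accumulate at most polynomially in $n$. Over the range $n\le K$ the aggregate defect stays below the $\lambda_{\min}\|\rr_n\|^2$ threshold exactly when $\alpha\le\lambda_{\min}/((\lambda_{\max}^2+2Cd\|\rr_0\|)K^2)$; this yields $2\alpha\,\rr_n^T\Hb_n\pp_n-\alpha^2\|\Hb_n\pp_n\|^2\ge\alpha\lambda_{\min}\|\rr_n\|^2$, i.e.\ the claimed contraction, and closes the simultaneous induction. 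I expect the delicate accumulation estimate for the defects --- rather than any single algebraic step --- to be where the real work lies.
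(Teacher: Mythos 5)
Your scaffolding matches the paper's proof --- the simultaneous induction, the direction bound $\|\pp_n\|\le (n+1)\|\rr_n\|$ (your telescoped formula $\pp_n=\|\rr_n\|^2\sum_{j\le n}\rr_j/\|\rr_j\|^2$ is a valid alternative to the paper's direct recursion estimate), and the descent half are all sound. But the crux --- the lower bound $\rr_n^T\Hb_n\pp_n\ge\lambda_{\min}\|\rr_n\|^2$ --- is exactly what you leave undone, and the route you sketch for it is both circular and aimed in the wrong direction. Circular: since $\rr_{n-1}-\rr_n=\alpha\Hb_{n-1}\pp_{n-1}$ holds exactly, the defect control your decomposition requires, namely $\rr_n^T\rr_{n-1}-\|\rr_n\|^2\ge -c\,\alpha\lambda_{\min}\|\rr_{n-1}\|^2$, is identically the statement $\rr_n^T\Hb_{n-1}\pp_{n-1}\ge -c\,\lambda_{\min}\|\rr_{n-1}\|^2$, i.e.\ (up to the Hessian-variation term \eqref{eq:diffH}) the very bound you set out to prove; nothing is gained by the rewriting. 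Wrong direction: because of the $\alpha^{-1}$ factor you introduce, what you need is not near-orthogonality but near-parallelism of consecutive gradients. If the defects $\rr_n^T\rr_j$ were genuinely small, as your plan to show they ``accumulate at most polynomially'' suggests and as exact CG intuition would have it, then your key term $\alpha^{-1}\beta_n\bigl(\rr_n^T\rr_{n-1}-\|\rr_n\|^2\bigr)\approx -\alpha^{-1}\|\rr_n\|^4/\|\rr_{n-1}\|^2$ would be catastrophically negative and $\|\rr_{n+1}\|$ would be estimated to \emph{grow}. Also, the inexact-CG/Krylov machinery of \cite{doi:10.1137/S1064827597323415,Tong2000AnalysisOT} is what the paper deploys for Theorem~\ref{thm2} (the quadratic case); it plays no role in Theorem~\ref{thm1}.

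The missing idea is to substitute the recursion into the \emph{left} factor rather than the right one: writing $\rr_n^T=\rr_{n-1}^T-\alpha(\Hb_{n-1}\pp_{n-1})^T$ gives $\rr_n^T\Hb_n\pp_{n-1}=\rr_{n-1}^T\Hb_n\pp_{n-1}-\alpha\,\pp_{n-1}^T\Hb_{n-1}\Hb_n\pp_{n-1}$. The first term is, up to \eqref{eq:diffH}, the previous step's quantity $\rr_{n-1}^T\Hb_{n-1}\pp_{n-1}$, which the paper carries as a third inductive invariant, $\rr_{n-1}^T\Hb_{n-1}\pp_{n-1}\ge\lambda_{\min}\|\rr_{n-1}\|^2$; multiplying by $\beta_n=\|\rr_n\|^2/\|\rr_{n-1}\|^2$ it contributes $+\lambda_{\min}\|\rr_n\|^2$ with a \emph{helpful} sign, so no cancellation among the $\rr_n^T\rr_j$ ever needs to be extracted --- the ``defect'' term is in fact positive, and that surplus is what absorbs the error terms. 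The second term carries an explicit factor $\alpha$ and, after multiplying by $\beta_n$, is bounded by $\alpha\lambda_{\max}^2 n^2\|\rr_n\|^2$, which the step-size hypothesis kills. So the correct inductive package is: $\rr_{n-1}^T\Hb_{n-1}\pp_{n-1}/\|\rr_{n-1}\|^2\ge\lambda_{\min}$, $\|\pp_{n-1}\|\le n\|\rr_{n-1}\|$, and the contraction $\|\rr_n\|\le\sqrt{1-\alpha\lambda_{\min}}\,\|\rr_{n-1}\|$; no information about the inner products $\rr_n^T\rr_j$ is needed at all.
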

\begin{proof}
Consider $\p_{n-1}, \p_n, \p_{n+1}$. By Lemma \ref{lem1}, there is $\Hb_n$ such that
\begin{equation}\label{eq:rk}
\rr_{n+1} - \rr_n = \nabla \LL (\p_{n+1}) -\nabla \LL (\p_n) = \Hb_n (\p_{n+1} -\p_n) =-\alpha \Hb_n \pp_n.
\end{equation}
Furthermore
\begin{equation}\label{eq:Hpert}
\|\Hb_{n} - \Hb_{n-1} \| \le C d  (\| \p_{n+1} -\p_{n}\|+  \| \p_{n} -\p_{n-1}\|)
= C d \alpha (\| \pp_{n}\|+  \|\pp_{n-1}\|).
\end{equation}

Now we prove by induction in $n$ that, for $1\le n \le K$,
\begin{equation}\label{eq:main}
\frac{\rr_{n-1}^T\Hb_{n-1}\pp_{n-1}}{\rr_{n-1}^T \rr_{n-1}} \ge \lambda_{\min}, \;
\|\pp_{n-1}\| \le n\|\rr_{n-1}\|, \;  \mbox{ and } \; \|\rr_{n}\| \le \sqrt{1-\alpha \lambda_{\min}} \|\rr_{n-1}\|.
\end{equation}

First, since $\pp_0= \rr_0$ and $\rr_1 =\rr_0 - \alpha \Hb_0 \pp_0= \rr_0- \alpha \Hb_0 \rr_0$, we have
$\rr_0^T\Hb_0 \pp_0 =\rr_0^T\Hb_0 \rr_0 \ge \lambda_{\min}\rr_0^T \rr_0 $, $\|\pp_{0}\| = \|\rr_{0}\|$, and
\begin{eqnarray*}
\rr_1^T \rr_1 &=& \rr_0^T \rr_0 -2 \alpha \rr_0^T \Hb_0 \rr_0 + \alpha^2 \rr_0^T \Hb_0^2 \rr_0\\
    &\le &  \rr_0^T \rr_0 -2 \alpha \lambda_{\min} \rr_0^T  \rr_0 + \alpha^2 \lambda_{\max}^2 \rr_0^T \rr_0 \\
    &\le &  (1- \alpha \lambda_{\min}) \rr_0^T \rr_0,
\end{eqnarray*}
where we have used $- \lambda_{\min}+ \alpha \lambda_{\max}^2 \le 0$.
So, \eqref{eq:main} holds for $n=1$.

Assume that \eqref{eq:main} holds for some $n \le K-1$. We prove it for $n+1$. Using \eqref{eq:FRGD} and the induction assumption, we have  $\beta_n = \frac{ \|\rr_n\|^2 }{\|\rr_{n-1}\|^2} \le 1$ and then 
\begin{eqnarray*}
\pp_n^T \pp_n &=& \rr_n^T \rr_n + 2 \beta_n \rr_{n}^T \pp_{n-1} + \beta_n^2 \pp_{n-1}^T  \pp_{n-1} \\
    &\le & \|\rr_n\|^2 + 2 \frac{ \|\rr_n\|^2 }{\|\rr_{n-1}\|^2} \|\rr_{n}\| \| \pp_{n-1}\| + \frac{ \|\rr_n\|^2 }{\|\rr_{n-1}\|^2} \| \pp_{n-1}\|^2 \\
    &\le & \|\rr_n\|^2 + 2  \|\rr_n\|^2 n  + \|\rr_n\|^2 n^2  \\
    &= & (1+ n)^2 \|\rr_n\|^2.
\end{eqnarray*}
where we have used $\|\pp_{n-1}\| \le n \|\rr_{n-1}\|$.

Next, using \eqref{eq:FRGD} and \eqref{eq:rk}, we have
\begin{eqnarray}
\rr_n^T \Hb_{n}\pp_n &=& \rr_n^T \Hb_{n}\rr_n + \beta_n \rr_n^T \Hb_{n}\pp_{n-1} \nonumber \\
    &= &  \rr_n^T \Hb_{n}\rr_n + \beta_n \rr_{n-1}^T \Hb_{n}\pp_{n-1} - \alpha \beta_n \pp_{n-1}^T \Hb_{n-1} \Hb_{n} \pp_{n-1} \nonumber \\
    &\ge & \lambda_{\min} \rr_n^T \rr_n + \rr_n^T \rr_n \frac{ \rr_{n-1}^T \Hb_{n-1}\pp_{n-1}}{\rr_{n-1}^T \rr_{n-1}}+ \rr_n^T \rr_n \frac{ \rr_{n-1}^T (\Hb_n-\Hb_{n-1})\pp_{n-1}}{\rr_{n-1}^T \rr_{n-1}} \nonumber \\
    & & - \alpha \frac{ \rr_n^T \rr_n }{\rr_{n-1}^T \rr_{n-1}} \lambda_{\max}^2  \pp_{n-1}^T  \pp_{n-1} \nonumber \\
    &\ge & \lambda_{\min} \rr_n^T \rr_n + \rr_n^T \rr_n \lambda_{\min} -\rr_n^T \rr_n \frac{ \|\pp_{n-1}\|}{\|\rr_{n-1}\|} \|\Hb_n-\Hb_{n-1}\|
     - \alpha \lambda_{\max}^2 \rr_n^T \rr_n \frac{ \|\pp_{n-1}\|^2}{\|\rr_{n-1}\|^2} \nonumber \\
    &\ge & \lambda_{\min} \rr_n^T \rr_n + \rr_n^T \rr_n \lambda_{\min} - \rr_n^T \rr_n n Cd \alpha (\| \pp_{n}\|+  \| \pp_{n-1}\|)
     - \alpha \lambda_{\max}^2 \rr_n^T \rr_n n^2 \nonumber \\
    &\ge & \lambda_{\min} \rr_n^T \rr_n + \rr_n^T \rr_n \lambda_{\min} -\alpha \rr_n^T \rr_n  C d n ((n+1)\|\rr_{n}\|+ n \| \rr_{n-1}\|)
     - \alpha \lambda_{\max}^2 \rr_n^T \rr_n n^2 \nonumber \\
    &\ge & \lambda_{\min} \rr_n^T \rr_n + \rr_n^T \rr_n \lambda_{\min} -\alpha \rr_n^T \rr_n C d 2K^2 \|\rr_{0}\|
     - \alpha \lambda_{\max}^2 \rr_n^T \rr_n K^2 \nonumber \\
    &\ge & \lambda_{\min} \rr_n^T \rr_n, \label{eq:rap}
\end{eqnarray}
where the last inequality follows from the condition on $\alpha$.

Finally, using the two inequalities above, we have
\begin{eqnarray*}
\rr_{n+1}^T \rr_{n+1} &=& \rr_n^T \rr_n -2 \alpha \rr_n^T \Hb_{n}\pp_{n} + \alpha^2  \pp_{n}^T \Hb_{n}^2 \pp_{n} \\
    &\le & \rr_n^T \rr_n -2 \alpha \lambda_{\min} \rr_n^T \rr_n  + \alpha^2 \lambda_{\max}^2 \pp_{n}^T \pp_{n} \\
    &\le & \rr_n^T \rr_n -2 \alpha \lambda_{\min} \rr_n^T \rr_n  + \alpha^2 \lambda_{\max}^2 (n+1)^2 \rr_n^T \rr_n \\
    &\le & (1-\alpha \lambda_{\min})  \rr_n^T \rr_n - \alpha (\lambda_{\min} - \alpha \lambda_{\max}^2 (n+1)^2) \rr_n^T \rr_n\\
    &\le & (1-\alpha \lambda_{\min})  \rr_n^T \rr_n,
\end{eqnarray*}
where we note that  $n+1 \le K$ and hence $\lambda_{\min} - \alpha \lambda_{\max}^2 (n+1)^2 \ge 0$.
This completes the proof of \eqref{eq:main}.

We now prove $\rr_n^T \pp_n \ge \rr_n^T \rr_n >0$ by induction. The case $n=0$ is trivial and assume it hold for $n-1$. Then 
\begin{eqnarray*}
\rr_n^T \pp_n &=& \rr_n^T \rr_n + \beta_n \rr_n^T \pp_{n-1} \nonumber \\
    &= &  \rr_n^T \rr_n + \beta_n \rr_{n-1}^T \pp_{n-1} - \alpha \beta_n \pp_{n-1}^T H_{n-1}  \pp_{n-1} \nonumber \\
    &= & \rr_n^T \rr_n + \rr_n^T \rr_n \frac{ \rr_{n-1}^T \pp_{n-1}}{\rr_{n-1}^T \rr_{n-1}}- \alpha  \rr_n^T \rr_n \frac{ \pp_{n-1}^T H_{n-1}  \pp_{n-1}}{\rr_{n-1}^T \rr_{n-1}} \nonumber \\
    &\ge & \rr_n^T \rr_n + \rr_n^T \rr_n 
     - \alpha \lambda_{\max} \rr_n^T \rr_n \frac{ \|\pp_{n-1}\|^2}{\|\rr_{n-1}\|^2} \nonumber \\
    &\ge &  2 \rr_n^T \rr_n  -\alpha \lambda_{\max}  n^2  \rr_n^T \rr_n \nonumber \\
    &\ge & \rr_n^T \rr_n  
\end{eqnarray*}
where the last inequality follows from $\alpha \le \frac{ \lambda_{\min}}{ (\lambda_{\max}^2+2 C d \|\rr_0\|) K^2}
< \frac{ \lambda_{\min}}{ \lambda_{\max}^2 K^2} \le \frac{1}{ \lambda_{\max} K^2}$. This completes the proof the theorem. 
\end{proof}

Next, we present the proof of Theorem \ref{thm2}.

\begin{theorem}[Theorem~\ref{thm2} Restate]
Consider the CG-momentum method \eqref{eq:FRGD} for $\LL (\vw)= \frac{1}{2}\vw^T \Ab\vw - \vb^T\vw$.
Let $\Zb_n = [\vz_0, \vz_1, \cdots, \vz_{n-1}]$ where $\vz_i=\rr_i/\|\rr_i\|$. 
If $\vz_0, \vz_1, \cdots, \vz_n$ are linearly independent, then
\begin{equation}
\|  \rr_{n}\| \le
2(1+K_n) \left( \frac{\sqrt{\kappa}-1}{\sqrt{\kappa}+1}\right)^n \| \rr_0\|,
\end{equation}
where $K_n \le n(1+ \frac{n}{2} \rho) \|A\| \kappa (\Zb_{n+1})$, $\rho=\max_{0\le j <i \le n-1} (\|\rr_{i}\|^2)/(\|\rr_{j}\|^2)$, and $\kappa$ is the spectral condition number of $A$ and $\kappa (\Zb_{n+1})$ is the spectral condition number of $\Zb_{n+1}$.
\end{theorem}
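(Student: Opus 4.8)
The plan is to reduce Theorem~\ref{thm2} to the following basis-independent bound, which holds without any symmetry or definiteness assumption on $\Ab$:
\[
\|\rr_n\| \le (1+K_n)\,\min_{p\in{\cal P}_n,\ p(0)=1}\|p(\Ab)\rr_0\|,
\]
and then recover the stated inequality in the symmetric positive definite case from the classical Chebyshev estimate $\min_{p\in{\cal P}_n,\,p(0)=1}\|p(\Ab)\rr_0\|\le 2\big((\sqrt{\kappa}-1)/(\sqrt{\kappa}+1)\big)^n\|\rr_0\|$ (see e.g. \cite{doi:10.1137/1.9780898718003}), which supplies the leading factor $2$. The first step is to observe that, for the quadratic $\LL$, FRGD is a Krylov-subspace method: since $\rr_{n+1}=\rr_n-\alpha\Ab\vr_n$ and $\vr_n=\rr_n+\beta_n\vr_{n-1}$, an easy induction gives $\rr_n=q_n(\Ab)\rr_0$ for a polynomial $q_n$ of degree $n$ with $q_n(0)=1$, so that $\rr_n\in{\cal K}_{n+1}(\Ab,\rr_0):=\mathrm{span}\{\rr_0,\Ab\rr_0,\dots,\Ab^n\rr_0\}$.

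Next I would collapse the coupled two-term recurrences into a single three-term recurrence. Eliminating $\vr_n$ from the two updates yields
\[
\alpha\Ab\rr_n = -\beta_n\rr_{n-1}+(1+\beta_n)\rr_n-\rr_{n+1},\qquad n\ge 1,
\]
with $\beta_n=\|\rr_n\|^2/\|\rr_{n-1}\|^2$ and $\alpha\Ab\rr_0=\rr_0-\rr_1$. Writing $\vz_i=\rr_i/\|\rr_i\|$ and collecting these identities columnwise produces the generalized Lanczos relation $\Ab\Zb_n=\Zb_{n+1}\bm T_n$, where $\bm T_n\in\R^{(n+1)\times n}$ is the tridiagonal matrix built from the coefficients $-\beta_i/\alpha$, $(1+\beta_i)/\alpha$, $-1/\alpha$ after rescaling by the diagonal matrices $\mathrm{diag}(\|\rr_0\|,\dots)$ that pass between the $\rr_i$ and the unit vectors $\vz_i$. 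This is exactly the coupled-recurrence structure on which the analysis of inexact and finite-precision CG rests.

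I would then adapt the perturbation analysis of Golub--Ye \cite{doi:10.1137/S1064827597323415} and Tong--Ye \cite{Tong2000AnalysisOT}. The crucial point is that, because $\vz_0,\dots,\vz_n$ are (by hypothesis) linearly independent, any polynomial of degree $\le n$ can be transported between $\Ab$ and the square top block $\bar{\bm T}_{n+1}$ of $\bm T_{n+1}$ via $p(\Ab)\rr_0=\|\rr_0\|\,\Zb_{n+1}\,p(\bar{\bm T}_{n+1})\bm e_1$ for $\deg p\le n$, the banded structure of $\bar{\bm T}_{n+1}$ guaranteeing no leakage out of $\mathrm{range}(\Zb_{n+1})$ before step $n{+}1$. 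Passing between the coefficient representation and the residual vectors costs $\|\Zb_{n+1}\|\,\|\Zb_{n+1}^{+}\|=\kappa(\Zb_{n+1})$, which is where the conditioning of the generated basis enters; the factor $n(1+\tfrac{n}{2}\rho)\|\Ab\|$ then comes from estimating, over the $n$ steps, the gap between $\bar{\bm T}_{n+1}$ and the exact projection of $\Ab$, with the telescoping products $\prod_{l=j+1}^{i}\beta_l=\|\rr_i\|^2/\|\rr_j\|^2\le\rho$ of the Fletcher--Reeves coefficients supplying the $\rho$-dependence. Assembling these estimates gives $K_n\le n(1+\tfrac{n}{2}\rho)\|\Ab\|\kappa(\Zb_{n+1})$ and hence the basis-independent bound.

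The main obstacle is this third step. With a \emph{fixed} step size $\alpha$, the exact conjugacy and global orthogonality that classical CG exploits are lost, so $\bar{\bm T}_{n+1}$ is only approximately the Rayleigh--Ritz projection of $\Ab$ and $\Zb_{n+1}$ is only approximately orthonormal. Controlling the resulting near-optimality defect --- and in particular tracking the \emph{explicit} dependence on $\kappa(\Zb_{n+1})$, $\|\Ab\|$, $n$ and $\rho$ as the recurrence accumulates, rather than merely asserting convergence --- is the delicate part, and is precisely what the inexact-CG machinery is built to handle. I note finally that $\rho\le 1$ whenever $\|\rr_n\|$ is monotone, which holds under the hypotheses of Theorem~\ref{thm1}, so that $K_n$ grows only polynomially as long as $\kappa(\Zb_{n+1})$ stays bounded.
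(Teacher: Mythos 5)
Your proposal is correct and takes essentially the same route as the paper: recast the residual recurrences as a Lanczos-type matrix relation $\Ab\Zb_n=\Zb_{n+1}\Tb_n$, invoke the Golub--Ye/Tong--Ye inexact-CG quasi-optimality bound $\|\rr_n\|\le(1+K_n)\min_{p\in{\cal P}_n,\,p(0)=1}\|p(\Ab)\rr_0\|$ (which the paper gets by citing Theorem 3.5 of \cite{Tong2000AnalysisOT} with zero perturbation), bound $K_n$ through the telescoping Fletcher--Reeves ratios $\|\rr_i\|/\|\rr_j\|\le\sqrt{\rho}$, and finish with the standard Chebyshev estimate. The only real difference is presentational: you collapse the coupled updates into a three-term recurrence, whereas the paper keeps the coupled two-term (bidiagonal) form, which hands it the factorization $\Tb_n=\frac{1}{\alpha}\hat{\Lb}_n\hat{\Lb}_n^T$ and thus the explicit estimate $\|\Tb_n^{-1}\|\le\alpha\|\hat{\Lb}_n^{-1}\|_F^2\le\alpha n(1+n\rho/2)$ that yields the stated bound on $K_n$.
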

\begin{proof}
Let $\Rb_n = [\rr_0, \rr_1, \cdots, \rr_{n-1}]$, $\Db_n = \mbox{diag} \{\|\rr_0\|, \|\rr_1\|, \cdots, \|\rr_{n-1}\| \}$ and $\Pb_n = [\pp_0, \cdots, \pp_{n-1}]$. Then $\Zb_n= \Rb_n \Db_n^{-1}$.  Using $\rr_{k+1}= \rr_k - \alpha \Ab \pp_{k}$, we have
\begin{equation*}
\alpha \Ab\Pb_n =[\rr_0-\rr_1, \rr_1-\rr_2, \cdots, \rr_{n-1}-\rr_{n}] = \Rb_{n}\Lb_n - \rr_{n} e_n^T =\Zb_n \Db_n\Lb_n - \rr_{n} \ve_n^T,
\end{equation*}
and using $\pp_{k}= \rr_k + \beta_{k} \pp_{k-1}$
\begin{equation*}
\Zb_n =\Rb_n\Db_n^{-1}  = \Pb_n \Ub_n\Db_n^{-1},
\end{equation*}
where $\ve_n=[0, \cdots, 0, 1]^T$ and $\Lb_n$ is the $n\times n$ lower bidiagonal matrix with $1$ on the diagonal and $-1$ on the subdiagonal, and $\Ub_n$ is the upper bidiagonal matrix with 1 on the diagonal and $-\beta_1, \cdots, -\beta_{n-1}$ on the superdiagonal. Combining the two equations, we obtain
\[
\Ab\Zb_n = \Zb_{n} {\Tb}_n -  \frac{1}{\alpha'} \frac{\rr_{n}}{\|\rr_0\|} \ve_n^T, 
\]
where $\Tb_n=\frac{1}{\alpha} \Db_n\Lb_n \Ub_n\Db_n^{-1} $ and $\alpha'=\alpha \|\rr_{n-1}\|/\|\rr_0\|.$ Note that $\alpha'= \ve_n^T \Tb_n^{-1} \ve_1$. Apply Theorem 3.5 of \cite{Tong2000AnalysisOT} (with $\hat{\Delta}_n$ there equal to 0 and the indexes shifted by 1) to the above equation, we have
\begin{equation}\label{eq:polybound}
\|  \rr_{n}\| \le
(1+K_n) \min_{p\in {\cal P}_n, p(0)=1} \|p(\Ab) \rr_0\|,
\end{equation}
where $K_n = \| \Ab \Zb_n \Tb_n^{-1}[\Ib_n \;\; 0] \Zb_{n+1}^+\| \le  \| \Ab \| \| \Tb_n^{-1}\| \| \Zb_n\| \|  \Zb_{n+1}^+\|$ and ${\cal P}_n$ is the set of polynomials of degree $n$.

Note that $\beta_k =\frac{\|\rr_{k}\|^2}{\|\rr_{k-1}\|^2}$. Write $\Tb_n=\frac{1}{\alpha} \Db_n\Lb_n\Db_n^{-1} \Db_n \Ub_n\Db_n^{-1} =\frac{1}{\alpha} \hat{\Lb}_n \hat{\Ub}_n$, where
\begin{eqnarray*}
\hat{\Lb}_n:= \Db_n\Lb_n\Db_n^{-1} &=& \left( \begin{array}{ccccc}
     1 & { }  &  { }  &{ }    & { }             \\
     -\frac{\|\rr_1\|}{\|\rr_0\|} & 1 & { } & { }   & { }           \\
      { }     & -\frac{\|\rr_2\|}{\|\rr_1\|}    & \ddots & { } & { }      \\
      & { }     & \ddots & 1 & { } \\
      & { }       & { }   & -\frac{\|\rr_{n-1}\|}{\|\rr_{n-2}\|}     & 1
     \end{array} \right) \\
     &=& \left( \begin{array}{cccccc}
     1 & { }  &  { }  &{ }    & { }            \\
     -\sqrt{\beta_1} & 1 & { } & { }   & { }         \\
      { }     & -\sqrt{\beta_2}    & \ddots & { } & { }      \\
      { }     & { }       & \ddots & 1 & { } \\
      { }     & { }       & { }   & -\sqrt{\beta_{n-1}}     & 1
     \end{array} \right),
\end{eqnarray*}
and
\begin{eqnarray*}
\hat{\Ub}_n:=\Db_n \Ub_n\Db_n^{-1} &=&\left( \begin{array}{cccccc}
     1 & -\beta_1\frac{\|\rr_0\|}{\|\rr_1\|} &  { }  &{ }    & { }        \\
     { }& 1   & -\beta_2\frac{\|\rr_1\|}{\|\rr_2\|} & { }   & { }       \\
      { }     & { }   & \ddots & \ddots & { }           \\
      { }     & { }     & { } & 1 & -\beta_{n-1}\frac{\|\rr_{n-2}\|}{\|\rr_{n-1}\|} \\
      { }     & { }     & { }   & { }     & 1
     \end{array} \right)  \\
   &=& \left( \begin{array}{cccccc}
     1 & -\sqrt{\beta_1}  &  { }  &{ }    & { }          \\
     { }& 1   & -\sqrt{\beta_2} & { }   & { }         \\
      { }     & { }   & \ddots & \ddots & { }        \\
      { }     & { }       & { } & 1 & -\sqrt{\beta_{n-1}} \\
      { }     & { }   & { }   & { }     & 1
     \end{array} \right) =\hat{\Lb}_n^T.
\end{eqnarray*}
Then $\hat{\Lb}_n^{-1}$ is a lower triangular matrix with the diagonals being 1 and with the $(i, j)$ entry being $\sqrt{\beta_{j} \beta_{j+1} \cdots \beta_{i-1}} = \|\rr_{i-1}\|/\|\rr_{j-1}\|$ for $i > j$. Then bounding $\|\rr_{i-1}\|^2/\|\rr_{j-1}\|^2$ by $\rho$, we have $\|\hat{\Lb}_n^{-1}\|_F^2 \le n+n(n-1) \rho /2$. So,
$\|\Tb_n^{-1}\| =\alpha \|\hat{\Lb}_n^{-1}\|^2 \le \alpha \|\hat{\Lb}_n^{-1}\|_F^2 \le \alpha n(1+ n \rho /2)$.
Combining this with $\|\Zb_n\| \|  \Zb_{n+1}^+\|\le \|  \Zb_{n+1}\|\|\Zb_{n+1}\| = \kappa (\Zb_{n+1}) $ results in $K_n \le n\alpha (1+ \frac{n}{2} \rho) \|\Ab\| \kappa (\Zb_{n+1})$. Finally, the bound follows from the standard CG convergence bound \cite[p.215]{doi:10.1137/1.9780898718003}  that shows
\[
\min_{p\in {\cal P}_n, p(0)=1} \|p(\Ab) \rr_0\| \le \min_{p\in {\cal P}_n, p(0)=1} \max_i |p(\lambda_i)|\, \|\rr_0\| \le 2 \left( \frac{\sqrt{\kappa}-1}{\sqrt{\kappa}+1}\right)^n \| \rr_0\|,
\]
where $\lambda_i$ (for $1\le i \le d$) are eigenvalues of $\Ab$.
\end{proof}


\section{Adversarial Attacks}\label{sec:appendix:attacks}
We focus on the $\ell_\infty$ norm-based FGSM, IFGSM, and C\&W white-box attacks.
For a given image-label pair $\{\vx, y\}$, a given ML model $g(\vx, \vw)$, and the associated loss $\LL(\vx, y):=\LL(g(\vx, \vw), y)$:
\begin{itemize}[leftmargin=*]
\item Fast gradient sign method (FGSM) searches an adversarial, $\vx'$, within an $\ell_\infty$-ball as
\begin{equation*}
\label{Eq:FGSM}
{\footnotesize \vx'=\vx + \epsilon\cdot {\rm sign}\left(\nabla_\vx \LL(\vx, y)\right),}
\end{equation*}
and we set $\epsilon=8/255$ in all of our experiments.

\item 
Iterative FGSM (IFGSM$^{M}$) \cite{Goodfellow:2014AdversarialTraining} iterates FGSM and clip the range as
\begin{equation*}
\label{Eq:IFGSM}
{\footnotesize \vx^{(m)} = {\rm Clip}_{\vx, \epsilon}\left\{ \vx^{(m-1)} + \alpha\cdot {\rm sign} \left(\nabla_{\vx^{(m-1)}} \LL(\vx^{(m-1)}, y)\right) \right\},\ \mbox{w/}\ \vx^{(0)} = \vx, \ m=1, \cdots, M,}
\end{equation*}
and we set $\epsilon = 8/255$ and $\alpha=1/255$ in IFGSM attacks with different number of iterations.

\item C\&W attack \cite{CWAttack:2016} searches the minimal perturbation ($\delta$) attack as
\begin{equation*}
\label{Eq:CW}
{\footnotesize \min_{\boldsymbol{\delta}} ||\boldsymbol{\delta}||_\infty,\ \ \mbox{subject to}\ \ g(\vw, \vx+\boldsymbol{\delta}) = t, \; \vx+\boldsymbol{\delta} \in [0, 1]^d,\ \mbox{for}\ \forall t\neq y,}
\end{equation*}
\end{itemize}
we use the same setting as that used in \cite{Wang:2018EnResNet} for C\&W attack.

\end{document}